\DeclareMathOperator*{\argmax}{arg\,max}
\DeclareMathOperator*{\argmin}{arg\,min}
\def\gT{{\mathcal{T}}}
\newtheorem{assum}{\textbf{Assumption}}
\newtheorem{definition}{\textbf{Definition}}
\newtheorem{lemma}{\textbf{Lemma}}
\newtheorem{theorem}{\textbf{Theorem}}
\newtheorem{proposition}{\textbf{Proposition}}
\newtheorem{remark}{\textbf{Remark}}
\newcommand{\nn}{\nonumber}
\DeclareMathAlphabet{\matheuf}{U}{euf}{m}{n}
\begin{document}
\title{Tightening Mutual Information Based Bounds on Generalization Error}
\author{Yuheng Bu, {\em  Member, IEEE},
Shaofeng Zou, {\em  Member, IEEE},
and Venugopal V. Veeravalli {\em Fellow, IEEE}\\
\thanks{This work was presented in part  at the IEEE International Symposium on Information Theory (ISIT), Paris, France, 2019  \cite{bu2019tightening}.}
\thanks{This work  was supported by Army Research Laboratory under Cooperative Agreement W911NF-17-2-0196, through the University of Illinois at Urbana-Champaign.}
\thanks{Y. Bu was  with the ECE Department and the Coordinated Science Laboratory, University of Illinois at Urbana-Champaign, Urbana, IL 61801 USA. He is now with the Institute for Data, Systems, and Society, Massachusetts Institute of Technology, Cambridge, MA 02139 USA (email: buyuheng@mit.edu).}
\thanks{S. Zou is with the Department of Electrical Engineering, University at Buffalo, The State University of New York, Buffalo, NY 14228 USA (email: szou3@buffalo.edu).}
\thanks{V. V. Veeravalli is with the ECE Department and the Coordinated Science Laboratory, University of Illinois at Urbana-Champaign, Urbana, IL 61801 USA (email: vvv@illinois.edu).}
}

\maketitle

\begin{abstract}
An information-theoretic upper bound on the generalization error of supervised learning algorithms is derived. The bound is constructed in terms of the mutual information between each individual training sample and the output of the learning algorithm. The bound is derived under more general conditions on the loss function than in existing studies; nevertheless, it provides a tighter characterization of the generalization error. Examples of learning algorithms are provided to demonstrate the the tightness of  the bound, and to show that it  has a broad range of applicability. Application to noisy and iterative algorithms, e.g., stochastic gradient Langevin dynamics (SGLD), is also studied, where the constructed bound provides a tighter characterization of the generalization error than existing results. Finally, it is demonstrated that, unlike existing bounds, which are difficult to compute and evaluate empirically, the proposed bound can be estimated easily in practice.
\end{abstract}

\begin{IEEEkeywords}
Cumulant generating function, generalization error, information-theoretic bounds, stochastic gradient Langevin dynamics
\end{IEEEkeywords}

\section{Introduction}

Recent success of deep learning algorithms \cite{goodfellow2016deep} has dramatically boosted their applications in various engineering and science domains, e.g.,  computer vision \cite{krizhevsky2012imagenet}, natural language processing \cite{young2018recent}, autonomous driving \cite{huval2015empirical}, and health care \cite{miotto2018deep}.
A deep neural network trained using a sufficiently large amount of training data can achieve a small training error, while simultaneously performing well on unseen data, i.e., it generalizes well. However, we have yet to develop a satisfactory understanding of why deep learning algorithms generalize well. 

Classical statistical learning approaches for analyzing the generalization capability of supervised learning algorithms can be mainly categorized into two groups. The first set of methods are based on measures of the complexity of the output hypothesis space, e.g.,  VC dimension and Rademacher complexity \cite{boucheron2005theory,shalev2014understanding}. However, these complexity measures usually scale exponentially with the depth of  deep neural networks \cite{anthony2009neural}. Moreover, these approaches do not take into consideration the regularization implicitly imposed by the algorithms used to train the neural networks, e.g., stochastic gradient descent \cite{neyshabur2014search,zhang2016understanding}. Thus, the generalization error bounds based on these complexity measures tend to be loose and do not explain why deep neural networks generalize well in practice.  The second set of methods are based on exploiting properties of the learning algorithm, e.g., PAC-Bayesian bounds \cite{mcallester1999some}, uniform stability \cite{bousquet2002stability, elisseeff2005stability}, and compression bounds \cite{littlestone1986relating}. However, as discussed in \cite{zhang2016understanding,bartlett2017spectrally}, these approaches do not exploit the fact that  the generalization error  depends strongly on the underlying  true data-generating distribution. For example, if  the labels are irrelevant to the input features, then the generalization error will be large for a deep neural network, since training error is usually small due to the large capacity of the network, but test error will be large due to the fact that there is no relationship between the input features and the label \cite{zhang2016understanding}.
Recently, it was proposed in \cite{russo2016controlling} and further studied in \cite{xu2017information} and \cite{NIPS2018} that the metric of  mutual information can be used to develop upper bounds on the generalization error of learning algorithms. Such an information-theoretic framework can handle a broader range of problems, and it could also address the aforementioned challenges of implicit regularization and dependence on  data generating distribution.
More importantly, it offers an information-theoretic point of view on how to improve the generalization capability of a learning algorithm, and this new perspective provides us with a better understanding of the generalization behavior of deep neural networks.

In this paper, we follow the information-theoretic framework proposed in \cite{russo2016controlling,xu2017information,NIPS2018}. Our main contribution is a tighter upper bound on the generalization error using the mutual information between an individual training sample and the output hypothesis of the learning algorithm. We show that compared to existing studies, our bound has a broader range of applicability, and can be considerably tighter.




We consider an instance space $\mathcal{Z}$, a  hypothesis space $\mathcal{W}$, and a nonnegative loss function $\ell: \mathcal{W}\times \mathcal{Z} \to \mathbb{R}^+$. A training dataset $ S=\{Z_1,\cdots,Z_n\}$ that consists of $n$ i.i.d samples $Z_i \in \mathcal{Z}$ drawn from an unknown distribution $\mu$ is available. The goal of a supervised learning algorithm is to find an output hypothesis $w \in \mathcal{W}$ that minimizes the \emph{population risk}:
\begin{equation}
  L_\mu(w) \triangleq \mathbb{E}_{Z\sim \mu}[\ell(w,Z)]. 
\end{equation}
In practice, $\mu$ is unknown, and thus $L_\mu(w)$ cannot be computed directly. Instead,  the \emph{empirical risk} of $w$ on a training dataset $S$ is studied, which is defined as
\begin{equation}
  L_S(w)\triangleq \frac{1}{n} \sum_{i=1}^n \ell(w,Z_i).
\end{equation}
A learning algorithm can be characterized by a randomized mapping from the training dataset $S$ to a hypothesis $W$ according to a conditional distribution $P_{W|S}$.

In statistical learning theory, the \emph{(mean) generalization error}\footnote{ The term ``generalization error" of a learning algorithm is usually defined as the difference between the population risk and the training error \emph{without} taking the expectation with respect to the randomness of the data and the learning algorithm. Here, we consider the expectation of the generalization error over the randomness of both the data and the learning algorithm. We will use the term ``generalization error" throughout the paper, with the understanding that it is  the ``mean generalization error".} of a supervised learning algorithm is the expected difference between the population risk of the output hypothesis and its empirical risk on the training dataset:
\begin{equation}
  \mathrm{gen}(\mu,P_{W|S}) \triangleq \mathbb{E}_{W, S}[L_\mu(W)-L_S(W)],
\end{equation}
where the expectation is taken over the joint distribution $P_{S,W} = P_S\otimes P_{W|S}$. Note that $P_{W|S}$ will become degenerate if $W$ is a deterministic function of $S$. 
The generalization error is used to measure the extent to which the learning algorithm overfits the training data.

\subsection*{Main Contributions and Related Works }
We first review the following lemma from \cite{xu2017information}, which provides an upper bound on the generalization error using the mutual information $I(S;W)$ between the training dataset $S$ and the output hypothesis $W$.
\begin{lemma}[{\cite[Theorem 1]{xu2017information}}]\label{lemma:original}
Suppose $\ell(w,Z)$ is $R$-sub-Gaussian\footnote{A random variable $X$ is $R$-sub-Gaussian if $\log \mathbb{E}[e^{\lambda(X-\mathbb{E}X)}] \le \frac{R^2\lambda^2}{2}$, $\forall \lambda \in \mathbb{R}$.} under $Z\sim \mu$ for all $w \in \mathcal{W}$, then
\begin{equation}\label{eq:4}
  |\mathrm{gen}(\mu,P_{W|S}) | \le \sqrt{\frac{2R^2}{n} I(S;W)}.
\end{equation}
\end{lemma}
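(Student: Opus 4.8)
The plan is to prove the bound via the Donsker--Varadhan variational characterization of relative entropy, combined with the sub-Gaussianity of the centered empirical risk under the product of the marginals of $(S,W)$.

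First I would introduce an independent copy of the training set, a \emph{ghost sample} $\bar S = (\bar Z_1,\dots,\bar Z_n)$ with $\bar S \sim P_S = \mu^{\otimes n}$ drawn independently of $W$, so that $(\bar S, W) \sim P_S \otimes P_W$. Since $\bar S \perp W$, for every fixed $w$ we have $\mathbb{E}_{\bar S}[L_{\bar S}(w)] = L_\mu(w)$, hence $\mathbb{E}_{P_S \otimes P_W}[L_\mu(W) - L_S(W)] = 0$. Subtracting this zero gives the identity
\begin{equation}
\mathrm{gen}(\mu, P_{W|S}) = \mathbb{E}_{P_{S,W}}\big[L_\mu(W) - L_S(W)\big] - \mathbb{E}_{P_S \otimes P_W}\big[L_\mu(W) - L_S(W)\big],
\end{equation}
so $\mathrm{gen}(\mu,P_{W|S})$ is exactly the change in $\mathbb{E}[f(S,W)]$, with $f(s,w) \triangleq L_\mu(w) - L_s(w)$, when the joint law $P_{S,W}$ is replaced by the product of its marginals.

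Next I would control the cumulant generating function of $f$ under the product measure. For fixed $w$, $f(\bar S, w) = \tfrac1n\sum_{i=1}^n\big(\mathbb{E}[\ell(w,Z_i)] - \ell(w,Z_i)\big)$ is an average of $n$ i.i.d.\ zero-mean $R$-sub-Gaussian variables, hence is $R/\sqrt n$-sub-Gaussian: $\log \mathbb{E}_{\bar S}[e^{\lambda f(\bar S, w)}] \le \lambda^2 R^2/(2n)$ for all $\lambda \in \mathbb{R}$. Because this estimate is uniform in $w$, a further expectation over $W \sim P_W$ yields $\log \mathbb{E}_{P_S \otimes P_W}[e^{\lambda f(S,W)}] \le \lambda^2 R^2/(2n)$. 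Then I invoke the Donsker--Varadhan inequality $\mathbb{E}_{P}[g] \le D(P\|Q) + \log \mathbb{E}_{Q}[e^{g}]$ with $P = P_{S,W}$, $Q = P_S \otimes P_W$ (so that $D(P\|Q) = I(S;W)$) and $g = \lambda f$, which together with the identity above gives $\lambda\,\mathrm{gen}(\mu,P_{W|S}) \le I(S;W) + \lambda^2 R^2/(2n)$ for every $\lambda \in \mathbb{R}$.

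Finally, rearranging produces the quadratic-in-$\lambda$ inequality $\tfrac{\lambda^2 R^2}{2n} - \lambda\,\mathrm{gen}(\mu,P_{W|S}) + I(S;W) \ge 0$ for all $\lambda$; since the leading coefficient is positive (the case $R=0$ forces $\mathrm{gen}=0$ and is trivial), its discriminant must be nonpositive, i.e., $\mathrm{gen}(\mu,P_{W|S})^2 \le \tfrac{2R^2}{n} I(S;W)$, which is the claim after taking square roots. Equivalently, optimizing over $\lambda>0$ bounds $\mathrm{gen}$ from above and over $\lambda<0$ bounds it from below. The step most prone to error will be the sign bookkeeping in the Donsker--Varadhan step and making sure the sub-Gaussian estimate, being uniform in $w$, genuinely survives the outer expectation over $W$; the case $I(S;W)=\infty$ makes the inequality vacuous.
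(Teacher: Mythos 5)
Your proof is correct and follows essentially the same route the paper (and the cited source \cite{xu2017information}) uses: the Donsker--Varadhan variational representation of $D(P_{S,W}\|P_S\otimes P_W)=I(S;W)$ applied to $f(s,w)=L_\mu(w)-L_s(w)$, together with the observation that $f(\bar S,w)$ is an average of $n$ i.i.d.\ $R$-sub-Gaussian terms and hence $R/\sqrt{n}$-sub-Gaussian uniformly in $w$, followed by optimization over $\lambda$. Your discriminant argument is just the closed form of the Legendre-dual optimization $\inf_{\lambda>0}(I(S;W)+\lambda^2R^2/(2n))/\lambda=\sqrt{2R^2 I(S;W)/n}$ that the paper carries out via Lemma~\ref{lemma:psi_star}, so the two are identical in substance.
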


This mutual information based bound in \eqref{eq:4} is related to the ``on-average'' stability (see, e.g., \cite{shalev2010learnability}), since it quantifies the overall dependence between the output of the learning algorithm and all the input training samples  via $I(S;W)$. Note that $I(S;W)$ depends on the main components of a supervised learning problem, i.e., the hypothesis space $\mathcal{W}$, the learning algorithm $P_{W|S}$, and the data generating distribution $\mu$, in contrast to the traditional bounds based on VC dimension or the uniform stability, which only depend on one aspect of the learning problem. We also note that there is a  connection between the mutual information based generalization bound and the PAC-Bayesian bound  in \cite{bassily2017learners}, since both methods adopt the variational representation of relative entropy to establish the decoupling lemma. By further exploiting the structure of the hypothesis space and the dependency between the algorithm input and output, the authors of \cite{NIPS2018,asadi2019chaining} combined the chaining and mutual information methods, and obtained a tighter bound on the generalization error.

However, the bound in Lemma \ref{lemma:original} and the chaining mutual information (CMI) bound in \cite{NIPS2018} both suffer from the following two shortcomings.
First, for empirical risk minimization (ERM), if $W$ is the unique minimizer of $L_S(w)$ in $\mathcal{W}$, then $W$ is a deterministic function of $S$ and the mutual information $I(S;W)=\infty$. It can be shown that both bounds are not tight in this case. Second, both bounds assume that $\ell(w,Z)$ has a bounded cumulant generating function (CGF) under $Z\sim \mu$ \emph{for all} $w \in \mathcal{W}$, which may not hold in many cases (see Section \ref{sec:deterministic}).

There has been some recent work on addressing these shortcomings of  mutual information based bounds on generalization error by using other information-theoretical measures, e.g., Wasserstein distance \cite{raginsky2016information,lopez2018generalization, ISIT2019wass}, maximal leakage \cite{issa2018computable,issa2019strengthened} and total variation \cite{alabdulmohsin2015algorithmic} to bound the generalization error. But the measures proposed in these papers are difficult to evaluate both analytically and empirically as we discuss in Section \ref{sec:eva}, which significantly undermines the usefulness of these results in practice.





In this paper, we get around the aforementioned shortcomings by combining the idea of point-wise stability \cite{elisseeff2005stability,raginsky2016information} with the information-theoretic framework introduced in \cite{xu2017information}. Specifically, an algorithm is said to be point-wise stable if the expectation of the loss function $\ell(W,Z_i)$ does not change too much with the replacement of any \textit{individual} training sample $Z_i$, and if an algorithm is point-wise stable, then it generalizes well \cite{elisseeff2005stability,raginsky2016information}. Motivated by these facts, we tighten the mutual information based generalization error bound through a bound based on  the individual sample mutual information (ISMI) $I(W;Z_i)$. Compared with the bound in Lemma \ref{lemma:original}, and the CMI bound in \cite{NIPS2018}, the ISMI bound is derived under a more general condition on the CGF of the loss function, is applicable to a broader range of problems, and can provide a tighter characterization of the generalization error.

The rest of the paper is organized as follows. In Section \ref{sec:pre}, we provide some preliminary definitions and results for our analysis. In Section \ref{sec:main}, we  introduce the individual sample mutual information generalization bound.  In Section \ref{sec:deterministic}, we apply our method to bound the generalization errors of two learning problems with infinite $I(S;W)$. We show in the second example that our ISMI bound can be tighter than the CMI bound in \cite{NIPS2018}, while the bound in Lemma \ref{lemma:original} is infinity. In Section \ref{sec:noisy}, we improve the generalization error bound in \cite{Pensia2018ISIT} for SGLD algorithm using our method, which demonstrates that the ISMI bound is applicable to the noisy, iterative algorithms discussed in \cite{Pensia2018ISIT}. In Section \ref{sec:eva}, we provide an example where the ISMI bound can be evaluated empirically from the samples, while other existing bounds are difficult to estimate due to prohibitive computational complexity.

\section{Preliminaries}\label{sec:pre}
We use upper letters to denote random variables, and calligraphic upper letters to denote sets. For a random variable $X$ generated from a distribution $\mu$, we use $\mathbb{E}_{X\sim\mu}$ to denote the expectation taken over $X$ with distribution $\mu$.  We write $I_d$ to denote the $d$-dimensional identity matrix. All the logarithms are the natural ones, and all the information measure units are nats. We use $\mu^{\otimes n}$ to denote the product distribution of $n$  copies of $\mu$.

\begin{definition}
The cumulant generating function (CGF) of a random variable $X$ is defined as
\begin{equation}
\Lambda_X(\lambda) \triangleq \log \mathbb{E}[e^{\lambda(X-\mathbb{E}X)}].
\end{equation}
\end{definition}
Assuming $\Lambda_X(\lambda)$ exists, it can be verified that $\Lambda_X(0)=\Lambda_X'(0)=0$, and that it is convex.

\begin{definition}
For a convex function $\psi$ defined on the interval $[0,b)$, where $0<b\le \infty$, its Legendre dual $\psi^*$ is defined as
\begin{equation}
  \psi^*(x) \triangleq \sup_{\lambda \in [0,b)} \big(\lambda x-\psi(\lambda)\big).
\end{equation}
\end{definition}

The following lemma characterizes a useful property of the Legendre dual and its inverse function.
\begin{lemma}[{\cite[Lemma 2.4]{boucheron2013concentration}}] \label{lemma:psi_star}
Assume that $\psi(0)= \psi'(0) = 0$. Then $\psi^*(x)$ defined above is a non-negative convex and non-decreasing function on $[0,\infty)$ with $\psi^*(0) = 0$. Moreover, its inverse function $\psi^{*-1}(y) = \inf\{x\ge 0: \psi^*(x)\ge y\}$ is concave, and can be written as
\begin{equation}
  \psi^{*-1}(y) = \inf_{\lambda \in (0,b)} \Big( \frac{y+\psi(\lambda)}{\lambda} \Big).
\end{equation}
\end{lemma}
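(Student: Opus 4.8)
The plan is to handle the two halves of the statement in turn: first the elementary regularity properties of $\psi^*$, and then the variational formula for its generalized inverse, from which concavity follows for free. The one structural fact that drives everything is that $\psi\ge 0$ on $[0,b)$: since $\psi$ is convex with $\psi(0)=\psi'(0)=0$, the supporting line at $0$ gives $\psi(\lambda)\ge\psi(0)+\psi'(0)\lambda=0$ for all $\lambda\in[0,b)$. With this in hand the four claimed properties of $\psi^*$ are immediate. Non-negativity: take $\lambda=0$ in the supremum, so $\psi^*(x)\ge 0\cdot x-\psi(0)=0$. Convexity: $x\mapsto\lambda x-\psi(\lambda)$ is affine for each fixed $\lambda$, and $\psi^*$ is their pointwise supremum. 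The value at the origin: $\psi^*(0)=\sup_{\lambda\in[0,b)}(-\psi(\lambda))=0$, using $\psi\ge0$ and $\psi(0)=0$. Monotonicity on $[0,\infty)$: for $0\le x_1\le x_2$ and each $\lambda\ge0$ we have $\lambda x_1-\psi(\lambda)\le\lambda x_2-\psi(\lambda)$, so the suprema are ordered.

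Next I would derive the closed form for $\psi^{*-1}$. Fix $y>0$ and analyze the superlevel set $A_y=\{x\ge0:\psi^*(x)\ge y\}$. Unfolding the definition of $\psi^*$, the term $\lambda=0$ is useless since it only certifies values $\le 0<y$, so $x$ lies in $A_y$ essentially when $\lambda x-\psi(\lambda)\ge y$ for some $\lambda\in(0,b)$, i.e. when $x\ge (y+\psi(\lambda))/\lambda$ for some such $\lambda$; note $(y+\psi(\lambda))/\lambda\ge0$ because $y>0$ and $\psi\ge0$. Hence $B_y:=\bigcup_{\lambda\in(0,b)}\big[(y+\psi(\lambda))/\lambda,\infty\big)\subseteq A_y$, which already gives $\inf A_y\le\inf B_y=\inf_{\lambda\in(0,b)}(y+\psi(\lambda))/\lambda$. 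For the reverse inequality I would use that $\psi^*$ is strictly increasing on $\{\psi^*>0\}$ — a consequence of convexity together with $\psi^*(0)=0$, since for $0<x<x'$ convexity yields $\psi^*(x)\le (x/x')\psi^*(x')$ — so that whenever $\psi^*(x)\ge y$ any $x'>x$ satisfies $\psi^*(x')>y$; then the supremum defining $\psi^*(x')$ exceeds $y$ and is therefore witnessed by some $\lambda\in(0,b)$, placing $x'\in B_y$. An $\varepsilon$-argument then gives $\inf B_y\le\inf A_y$, so $\psi^{*-1}(y)=\inf_{\lambda\in(0,b)}(y+\psi(\lambda))/\lambda$ for $y>0$. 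The case $y=0$ is checked directly: the left side is $\inf\{x\ge0:\psi^*(x)\ge0\}=0$ since $\psi^*\ge0$, and the right side is $\inf_{\lambda}\psi(\lambda)/\lambda=0$, using $\psi(\lambda)/\lambda\to\psi'(0)=0$ as $\lambda\downarrow0$ together with $\psi\ge0$.

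Finally, concavity of $\psi^{*-1}$ is a corollary of this formula: for each fixed $\lambda\in(0,b)$ the map $y\mapsto (y+\psi(\lambda))/\lambda$ is affine in $y$, and a pointwise infimum of affine functions is concave. I expect the main obstacle to be not any single hard estimate but the boundary bookkeeping in the middle step — correctly excising $\lambda=0$, dealing with the fact that the supremum defining $\psi^*(x)$ need not be attained (so that $A_y$ and $B_y$ need only agree up to their infima), and treating $y=0$ separately — which is exactly the place where a careless argument would yield an incorrect or incomplete description of the inverse.
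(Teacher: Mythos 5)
Your proof is correct. The paper itself gives no proof of this lemma---it is quoted directly from Boucheron--Lugosi--Massart (Lemma 2.4)---and your argument is essentially the standard one from that reference: the regularity of $\psi^*$ from $\psi\ge 0$ and the sup-of-affine structure, the variational formula via the two-sided comparison of the superlevel set of $\psi^*$ with the union of rays $\big[(y+\psi(\lambda))/\lambda,\infty\big)$, and concavity as an infimum of affine functions; your careful handling of the non-attained supremum and of the $y=0$ case is exactly the bookkeeping the cited proof also requires.
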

For an $R$-sub-Gaussian random variable $X$, $\psi(\lambda)= \frac{R^2\lambda^2}{2}$ is an upper bound on $\Lambda_X(\lambda)$. Then by Lemma \ref{lemma:psi_star}, $\psi^{*-1}(y)=\sqrt{2R^2y}$.

\section{Bounding Generalization Error via $I(W;Z_i)$}\label{sec:main}
In this section, we first generalize the decoupling lemma in \cite[Lemma 1]{xu2017information} to a different setting, and then tighten the bound on generalization error via the individual sample mutual information $I(W;Z_i)$.



\subsection{General Decoupling Estimate}
Consider a pair of random variables $W$ and $Z$ with joint distribution $P_{W,Z}$. Let $\widetilde{W}$ be an independent copy of $W$, and $\widetilde{Z}$ be an independent copy of $Z$, such that $P_{\widetilde{W}\widetilde{Z}} = P_{{W}} \otimes P_{{Z}}$. Suppose $f: \mathcal{W}\times  \mathcal{Z} \to \mathbb{R}$ is a real-valued function. If the CGF $\Lambda_{f(\widetilde{W},\widetilde{Z})}(\lambda)$ of $f(\widetilde{W},\widetilde{Z})$ can be upper bounded by some function $\psi$ for $\lambda\in(b_-,b_+)$, we have the following theorem.
\begin{theorem}\label{thm:decouple}
Assume that $\Lambda_{f(\widetilde{W},\widetilde{Z})}(\lambda) \le \psi_{+}(\lambda)$  for $\lambda \in [0,b_+)$, and $\Lambda_{f(\widetilde{W},\widetilde{Z})}(\lambda) \le \psi_{-}(-\lambda)$ for $\lambda\in (b_-,0]$ under distribution $P_{\widetilde{W}\widetilde{Z}} = P_{{W}} \otimes P_{{Z}}$, where $0 <b_+\le \infty$ and $-\infty \le b_- <0 $. Suppose that $\psi_{+}(\lambda)$ and $\psi_{-}(\lambda)$ are convex, and  $\psi_{+}(0) =  \psi_{+}'(0) = \psi_{-}(0) =  \psi_{-}'(0)=0$. Then,
\begin{align}
  \mathbb{E}[f(W,Z)] - \mathbb{E}[f(\widetilde{W},\widetilde{Z})]  &\le \psi^{*-1}_{+} \big(I(W;Z)\big),\\
    \mathbb{E}[f(\widetilde{W},\widetilde{Z})] - \mathbb{E}[f(W,Z)] &\le \psi^{*-1}_{-} \big(I(W;Z)\big).
\end{align}
\end{theorem}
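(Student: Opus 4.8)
The plan is to derive both inequalities from the Gibbs variational principle (equivalently, the Donsker--Varadhan variational formula for relative entropy), in the same spirit as \cite[Lemma 1]{xu2017information}, but carrying the two one-sided CGF bounds $\psi_+$ and $\psi_-$ along separately. The key input is the decoupling estimate: for any measurable $g:\mathcal{W}\times\mathcal{Z}\to\mathbb{R}$ with $\mathbb{E}_{P_W\otimes P_Z}[e^{g(\widetilde W,\widetilde Z)}]<\infty$,
\[
\mathbb{E}_{P_{W,Z}}[g(W,Z)] \;\le\; D\big(P_{W,Z}\,\|\,P_W\otimes P_Z\big) \;+\; \log \mathbb{E}_{P_W\otimes P_Z}\big[e^{g(\widetilde W,\widetilde Z)}\big],
\]
which follows from the nonnegativity of the relative entropy between $P_{W,Z}$ and the tilted measure proportional to $e^{g}\,d(P_W\otimes P_Z)$. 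Since $I(W;Z)=D(P_{W,Z}\,\|\,P_W\otimes P_Z)$, if $I(W;Z)=\infty$ both claimed bounds are vacuous, so I would assume $I(W;Z)<\infty$; the hypothesis $\Lambda_{f(\widetilde W,\widetilde Z)}(\lambda)\le\psi_{\pm}$ on a two-sided neighborhood of $0$ guarantees that $\mathbb{E}[f(\widetilde W,\widetilde Z)]$ and the relevant MGFs near $0$ are finite, so the estimate applies and all expectations are well defined.

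For the first inequality I would apply the decoupling estimate with $g=\lambda f$ for a fixed $\lambda\in[0,b_+)$. Writing $\log\mathbb{E}_{P_W\otimes P_Z}[e^{\lambda f(\widetilde W,\widetilde Z)}]=\Lambda_{f(\widetilde W,\widetilde Z)}(\lambda)+\lambda\,\mathbb{E}[f(\widetilde W,\widetilde Z)]$ and using $\Lambda_{f(\widetilde W,\widetilde Z)}(\lambda)\le\psi_+(\lambda)$, the estimate rearranges to
\[
\lambda\big(\mathbb{E}[f(W,Z)]-\mathbb{E}[f(\widetilde W,\widetilde Z)]\big)-\psi_+(\lambda)\;\le\; I(W;Z),\qquad\forall\,\lambda\in[0,b_+).
\]
Let $\Delta_+:=\mathbb{E}[f(W,Z)]-\mathbb{E}[f(\widetilde W,\widetilde Z)]$. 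If $\Delta_+\le 0$ the first claim is immediate since $\psi_+^{*-1}\ge 0$ by Lemma \ref{lemma:psi_star}; otherwise, for each $\lambda\in(0,b_+)$ I divide by $\lambda$ to get $\Delta_+\le\big(I(W;Z)+\psi_+(\lambda)\big)/\lambda$, and then take the infimum over $\lambda\in(0,b_+)$. By the alternative formula $\psi_+^{*-1}(y)=\inf_{\lambda\in(0,b_+)}\big(y+\psi_+(\lambda)\big)/\lambda$ from Lemma \ref{lemma:psi_star} (applicable because $\psi_+$ is convex with $\psi_+(0)=\psi_+'(0)=0$), this is exactly $\Delta_+\le\psi_+^{*-1}(I(W;Z))$.

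The second inequality is the mirror image: I would apply the decoupling estimate with $g=-\lambda f$ for $\lambda\in[0,-b_-)$, so that the hypothesis supplies $\Lambda_{f(\widetilde W,\widetilde Z)}(-\lambda)\le\psi_-(\lambda)$, and rearrange to $\lambda\big(\mathbb{E}[f(\widetilde W,\widetilde Z)]-\mathbb{E}[f(W,Z)]\big)-\psi_-(\lambda)\le I(W;Z)$ for all $\lambda\in[0,-b_-)$; dividing by $\lambda\in(0,-b_-)$, taking the infimum, and invoking Lemma \ref{lemma:psi_star} with $b=-b_-$ gives the claim. I do not anticipate a genuine obstacle here; the two points that need care are (i) using the infimum representation of $\psi^{*-1}$ from Lemma \ref{lemma:psi_star} rather than attempting to literally invert $\psi^*$, which need not be strictly increasing, and (ii) the sign case-split on $\Delta_{\pm}$, which is what legitimizes the division by $\lambda$ and lets the range $[0,b_{\pm})$ reduce cleanly to $(0,b_{\pm})$ before passing to the infimum.
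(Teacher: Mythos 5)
Your proposal is correct and follows essentially the same route as the paper: the Donsker--Varadhan variational representation of $D(P_{W,Z}\|P_W\otimes P_Z)$ applied to $g=\lambda f$, the one-sided CGF bounds $\psi_\pm$ to control $\log\mathbb{E}[e^{\lambda f(\widetilde W,\widetilde Z)}]$, and the infimum representation of $\psi^{*-1}$ from Lemma \ref{lemma:psi_star}. Your explicit sign case-split on $\Delta_\pm$ before dividing by $\lambda$ and the remark on the vacuous case $I(W;Z)=\infty$ are small points of added care that the paper leaves implicit.
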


\begin{proof}
Consider the  variational representation of the relative entropy between two probability measures $P$ and $Q$ defined on $\mathcal{X}$:
\begin{equation}
  D(P\|Q) = \sup_{g \in \mathcal G} \Big\{ \mathbb{E}_P[g(X)]  -\log \mathbb{E}_Q[e^{g(X)}] \Big\},
\end{equation}
where the supremum is over all measurable functions $\mathcal G =\{g : \mathcal{X} \to \mathbb{R},\ \mathrm{s.t.}\ \mathbb{E}_Q[e^{g(X)}]< \infty\}$, and equality is achieved when $g=\log \frac{dP}{dQ}$, where $\frac{dP}{dQ}$ is the Radon--Nikodym derivative. It then follows that   $\forall\lambda \in [0,b_+)$,
\begin{align}\label{equ:positive}
 I(W;Z) &= D(P_{W,Z}\| P_W \otimes P_Z ) \nn \\
 &\ge \mathbb{E}[\lambda f(W,Z)] - \log \mathbb{E}[e^{\lambda f(\widetilde{W},\widetilde{Z})}]\nn\\
  &\ge \lambda(\mathbb{E}[ f(W,Z)] - \mathbb{E}[ f(\widetilde{W},\widetilde{Z})])-\psi_{+}(\lambda),
\end{align}
where the last inequality follows from the assumption that
\begin{equation}
\Lambda_f (\widetilde{W}; \widetilde{Z}) = \log\mathbb{E}[e^{\lambda(f(\widetilde{W},\widetilde{Z})-\mathbb{E}f(\widetilde{W},\widetilde{Z}))}] \le \psi_{+}(\lambda),
\end{equation}
for $\lambda \in [0,b_+)$. Similarly, for $\lambda\in (b_-,0]$, it follows that
\begin{align}\label{equ:negative}
  D(P_{W,Z}\|&P_W \otimes P_Z )  \nn \\
  &\ge \lambda(\mathbb{E}[ f(W,Z)] - \mathbb{E}[ f(\widetilde{W},\widetilde{Z})])-\psi_{-}(-\lambda).
\end{align}

From \eqref{equ:positive} it follows that
\begin{align}\label{eq:thm1_proof1}
\mathbb{E}[ f(W,Z)] - \mathbb{E}[ f(\widetilde{W},\widetilde{Z})] & \le \inf_{\lambda \in [0,b_+)} \frac{I(W;Z)+\psi_{+}(\lambda)}{\lambda}\nn \\
&= \psi^{*-1}_+\big(I(W;Z)\big),
\end{align}
and from \eqref{equ:negative} it follows that
\begin{align}\label{eq:thm1_proof2}
\mathbb{E}[ f(\widetilde{W},\widetilde{Z})] - \mathbb{E}[ f(W,Z)]
& \le \inf_{\lambda \in [0,-b_-)} \frac{I(W;Z)+\psi_{-}(\lambda)}{\lambda}\nn \\
&= \psi^{*-1}_{-}\big(I(W;Z)\big),
\end{align}
where the equalities in \eqref{eq:thm1_proof1} and \eqref{eq:thm1_proof2} follow from Lemma \ref{lemma:psi_star}.
\end{proof}

Theorem \ref{thm:decouple} provides a different characterization of the decoupling estimate than existing results. Specifically, it is assumed that the CGF of $f(w,Z)$ is bounded for all $w\in \mathcal{W}$ and $Z\sim \mu$ in \cite[Lemma 1]{xu2017information} and \cite[Theorem 2]{jiao2017dependence}, whereas in Theorem \ref{thm:decouple}, it is assumed that the CGF of $f(\widetilde W,\widetilde Z)$ is bounded in expectation under $P_{\widetilde{W}\widetilde{Z}} = P_{{W}} \otimes P_{{Z}}$.




\subsection{Individual Sample Mutual Information Bound}
Motivated by the idea of algorithmic stability, which measures how much an output hypothesis changes with the replacement of an \textit{individual} training sample, we construct the following upper bound on the generalization error via $I(W;Z_i)$.


\begin{theorem}\label{thm:main}
Suppose $\ell(\widetilde W,\widetilde Z)$ satisfies $\Lambda_{\ell(\widetilde W,\widetilde Z)}(\lambda) \le \psi_{+}(\lambda)$ for $\lambda \in [0,b_+)$, and $\Lambda_{\ell(\widetilde W,\widetilde Z)}(\lambda) \le \psi_{-}(-\lambda)$ for $\lambda\in (b_-,0]$ under $P_{\widetilde Z,\widetilde W} = \mu\otimes P_{W}$, where $0 <b_+\le \infty$ and $-\infty \leq b_- <0 $. Then,
\begin{align}
    \mathrm{gen}(\mu,P_{W|S}) \le \frac{1}{n} \sum_{i=1}^n  \psi^{*-1}_{-}\big(I(W;Z_i)\big),\\
  -\mathrm{gen}(\mu,P_{W|S}) \le \frac{1}{n} \sum_{i=1}^n  \psi^{*-1}_{+}\big(I(W;Z_i)\big).
\end{align}
\end{theorem}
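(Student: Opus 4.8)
The plan is to decompose the generalization error into a sum of $n$ per-sample terms, each of which is exactly of the form handled by Theorem~\ref{thm:decouple}, and then apply that theorem with $f=\ell$ to each pair $(W,Z_i)$.

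First I would rewrite the two expectations in $\mathrm{gen}(\mu,P_{W|S})$. By linearity of expectation, $\mathbb{E}_{W,S}[L_S(W)] = \frac{1}{n}\sum_{i=1}^n \mathbb{E}_{P_{W,Z_i}}[\ell(W,Z_i)]$, where $P_{W,Z_i}$ is the joint law of the output hypothesis and the $i$-th training sample (obtained by marginalizing $P_{S,W}$ over all coordinates of $S$ except $Z_i$). For the population risk, the test sample appearing in $L_\mu(W)$ is drawn from $\mu$ independently of $W$, and since each $Z_i$ has marginal $\mu$, we get $\mathbb{E}_{W,S}[L_\mu(W)] = \mathbb{E}_{P_W\otimes\mu}[\ell(\widetilde W,\widetilde Z)] = \frac{1}{n}\sum_{i=1}^n \mathbb{E}_{P_W\otimes\mu}[\ell(\widetilde W,\widetilde Z_i)]$, where $(\widetilde W,\widetilde Z_i)$ denotes an independent pair with the same marginals $P_W$ and $\mu$ as $(W,Z_i)$. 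Subtracting gives the decomposition $\mathrm{gen}(\mu,P_{W|S}) = \frac{1}{n}\sum_{i=1}^n\big(\mathbb{E}[\ell(\widetilde W,\widetilde Z_i)]-\mathbb{E}[\ell(W,Z_i)]\big)$.

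Next I would invoke Theorem~\ref{thm:decouple} for each $i$, taking $f=\ell$ and the pair $(W,Z)$ there to be $(W,Z_i)$. The reference product distribution $P_{\widetilde W\widetilde Z}=P_W\otimes\mu$ is exactly the distribution under which the CGF conditions $\Lambda_{\ell(\widetilde W,\widetilde Z)}(\lambda)\le\psi_{+}(\lambda)$ and $\Lambda_{\ell(\widetilde W,\widetilde Z)}(\lambda)\le\psi_{-}(-\lambda)$ are assumed in the hypothesis of Theorem~\ref{thm:main}, and the convexity/normalization requirements on $\psi_{\pm}$ are inherited verbatim. The two conclusions of Theorem~\ref{thm:decouple} then read, for every $i$, $\mathbb{E}[\ell(\widetilde W,\widetilde Z_i)]-\mathbb{E}[\ell(W,Z_i)]\le\psi^{*-1}_{-}\big(I(W;Z_i)\big)$ and $\mathbb{E}[\ell(W,Z_i)]-\mathbb{E}[\ell(\widetilde W,\widetilde Z_i)]\le\psi^{*-1}_{+}\big(I(W;Z_i)\big)$. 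Summing the first inequality over $i$ and dividing by $n$ yields the claimed bound on $\mathrm{gen}(\mu,P_{W|S})$; summing the second yields the bound on $-\mathrm{gen}(\mu,P_{W|S})$.

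The only genuinely delicate point is the measure-theoretic bookkeeping of the marginals: one has to confirm that, because the $Z_i$ are i.i.d.\ $\mu$ and enter $W$ only through $S$, the marginal of $Z_i$ under $P_{S,W}$ is indeed $\mu$ and the marginal of $W$ is the same $P_W$ for all $i$, so that a single CGF assumption stated under $P_W\otimes\mu$ suffices to control all $n$ terms simultaneously. I expect no further obstacle — the rest is a term-by-term application of Theorem~\ref{thm:decouple}, and none of the finer properties of $\psi^{*-1}_{\pm}$ (concavity, monotonicity) are needed for this step.
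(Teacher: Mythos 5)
Your proposal is correct and follows essentially the same route as the paper: decompose $\mathrm{gen}(\mu,P_{W|S})$ into the $n$ per-sample differences $\mathbb{E}[\ell(\widetilde W,\widetilde Z)]-\mathbb{E}[\ell(W,Z_i)]$ and apply Theorem~\ref{thm:decouple} term by term under the product law $P_W\otimes\mu$. Your extra remark on checking that each $Z_i$ has marginal $\mu$ and $W$ has the same marginal $P_W$ for every $i$ is exactly the (implicit) bookkeeping the paper relies on.
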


\begin{proof}

The generalization error can be written as follows:
\begin{align}\label{eq:gen_dec}
  \mathrm{gen}(\mu,P_{W|S})
   & = \frac{1}{n} \sum_{i=1}^n \Big(\mathbb{E}_{W,\widetilde{Z}}[\ell(W,\widetilde{Z})] - \mathbb{E}_{W,Z_i} [\ell(W,Z_i)] \Big),
\end{align}
where $W$ and $Z_i$ in the second term are dependent with $P_{W,Z_i} = \mu\otimes P_{W|Z_i}$, and $W$ and $\widetilde{Z}$ in the first term are independent with the same marginal distributions. Applying Theorem \ref{thm:decouple}  completes the proof.
%
\end{proof}

In the following proposition, we derive the ISMI bounds under two different sub-Gaussian assumptions.
\begin{proposition}\label{prop:ISMI}
\begin{enumerate}
\item Suppose that $\ell(w,Z)$ is $R$-sub-Gaussian under $Z\sim \mu$ for all $w\in \mathcal W$, then
\begin{equation}
  \big|\mathrm{gen}(\mu,P_{W|S})\big| \le \frac{1}{n} \sum_{i=1}^n  \sqrt{2R^2I(W;Z_i)}.
\end{equation}
\item Suppose that $\ell(\widetilde{W},\widetilde{Z})$ is $R$-sub-Gaussian under distribution $P_{\widetilde{W}\widetilde{Z}} = P_{{W}} \otimes P_{{Z}}$, then
\begin{equation}
  \big|\mathrm{gen}(\mu,P_{W|S})\big| \le \frac{1}{n} \sum_{i=1}^n  \sqrt{2R^2I(W;Z_i)}.
\end{equation}
\end{enumerate}
\end{proposition}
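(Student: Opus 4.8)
The plan is to prove the two parts separately: Part~2 is an immediate specialization of Theorem~\ref{thm:main}, while Part~1 needs a conditional refinement because its hypothesis is genuinely weaker.

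For Part~2 I would instantiate Theorem~\ref{thm:main} with $\psi_{+}(\lambda) = \psi_{-}(\lambda) = R^2\lambda^2/2$, $b_{+} = \infty$ and $b_{-} = -\infty$. The assumed $R$-sub-Gaussianity of $\ell(\widetilde W,\widetilde Z)$ under $P_{\widetilde W\widetilde Z} = P_W\otimes\mu$ is exactly the statement $\Lambda_{\ell(\widetilde W,\widetilde Z)}(\lambda)\le R^2\lambda^2/2$ for all $\lambda$, so both one-sided CGF hypotheses of Theorem~\ref{thm:main} hold. By the computation recorded right after Lemma~\ref{lemma:psi_star}, $\psi^{*-1}_{+}(y)=\psi^{*-1}_{-}(y)=\sqrt{2R^2y}$, so the two displayed inequalities of Theorem~\ref{thm:main} become $\mathrm{gen}(\mu,P_{W|S})\le \frac1n\sum_i\sqrt{2R^2I(W;Z_i)}$ and $-\mathrm{gen}(\mu,P_{W|S})\le\frac1n\sum_i\sqrt{2R^2I(W;Z_i)}$, which combine into the claimed two-sided bound.

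For Part~1 I would \emph{not} try to route through Theorem~\ref{thm:main}: pointwise $R$-sub-Gaussianity of $\ell(w,Z)$ under $Z\sim\mu$ does not imply $R$-sub-Gaussianity of $\ell(\widetilde W,\widetilde Z)$ under $P_W\otimes\mu$, since the marginal fluctuation of $L_\mu(W)$ adds a spurious non-negative CGF term and spoils the constant. Instead I would condition on $W$. Writing $\mathrm{gen}$ in the form used in the proof of Theorem~\ref{thm:main}, the $i$-th summand equals $-\,\mathbb{E}_{W}\big[\,\mathbb{E}_{Z_i\sim P_{Z_i|W}}[\ell(W,Z_i)]-L_\mu(W)\,\big]$. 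I would then use the identity $I(W;Z_i)=\mathbb{E}_{W}\big[D(P_{Z_i|W}\,\|\,\mu)\big]$ (valid since $P_{Z_i}=\mu$), and for each fixed value $w$ of $W$ apply the variational representation of relative entropy from the proof of Theorem~\ref{thm:decouple} to $D(P_{Z_i|W=w}\|\mu)$ with test function $\lambda\,\ell(w,\cdot)$, using the pointwise bound $\log\mathbb{E}_{Z\sim\mu}[e^{\lambda\ell(w,Z)}]\le\lambda L_\mu(w)+R^2\lambda^2/2$. Optimizing over $\lambda>0$ and over $\lambda<0$ yields the pointwise estimate $\big|\mathbb{E}_{Z_i\sim P_{Z_i|W=w}}[\ell(w,Z_i)]-L_\mu(w)\big|\le\sqrt{2R^2\,D(P_{Z_i|W=w}\|\mu)}$.

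Finally I would take the expectation over $W$, pull it inside the square root by Jensen's inequality (concavity of $y\mapsto\sqrt{y}$), and invoke the chain-rule identity above to get $\big|\mathbb{E}_{W}[\cdots]\big|\le\sqrt{2R^2I(W;Z_i)}$ for each $i$; averaging over $i$ and using the triangle inequality gives the bound. The main obstacle is exactly this Part~1 reduction: one must recognize that the product-measure sub-Gaussian hypothesis of Theorem~\ref{thm:main} is strictly stronger than the pointwise one, and recover the clean constant $\sqrt{2R^2}$ by conditioning on $W$ rather than by a crude sub-Gaussian bound on $\ell(\widetilde W,\widetilde Z)$ under $P_W\otimes\mu$.
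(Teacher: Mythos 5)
Your proof is correct. Part 2 coincides with the paper's argument: specialize Theorem \ref{thm:main} with $\psi_{+}(\lambda)=\psi_{-}(\lambda)=R^2\lambda^2/2$, so that $\psi_{\pm}^{*-1}(y)=\sqrt{2R^2y}$, and combine the two one-sided inequalities. For Part 1, however, you take a genuinely different route. The paper simply writes $\mathrm{gen}$ as the average of the $n$ per-sample gaps and invokes Lemma \ref{lemma:original} with the dataset taken to be the single sample $Z_i$ (i.e., $n=1$); the mechanism underlying that lemma is Donsker--Varadhan on the joint space $(W,Z_i)$ applied to the \emph{centered} test function $\lambda(\ell(w,z)-L_\mu(w))$, whose exponential moment under $P_W\otimes\mu$ is controlled by first conditioning on $\widetilde W$ and using the pointwise sub-Gaussianity. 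You instead disintegrate $I(W;Z_i)=\mathbb{E}_W[D(P_{Z_i|W}\,\|\,\mu)]$, run Donsker--Varadhan on the $Z$-marginal for each fixed $w$, and then pull the expectation over $W$ inside the square root by Jensen. The two arguments are equivalent in outcome, but yours proves a slightly stronger intermediate statement, namely the bound $\mathbb{E}_W\big[\sqrt{2R^2 D(P_{Z_i|W}\|\mu)}\big]\le\sqrt{2R^2 I(W;Z_i)}$, which is precisely the ``disintegrated'' form exploited in later data-dependent refinements, and it is self-contained rather than resting on a citation. Your diagnosis that Part 1 cannot be obtained by feeding the pointwise hypothesis into Theorem \ref{thm:main} is also correct and is confirmed by the paper's own Remark (the Cauchy example), so treating the two parts by separate arguments, as you do, is indeed necessary.
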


\begin{proof}
1) The generalization error can be written as in \eqref{eq:gen_dec}, where $W$ and $Z_i$ in the second term are dependent with $P_{W,Z_i} = \mu\otimes P_{W|Z_i}$, and $W$ and $\widetilde{Z}$ in the first term are independent whose marginal distributions are  the same as those of $W$ and $Z_i$. The first inequality then follows from Lemma \ref{thm:decouple} by letting $S=Z_i$ and $n=1$, for each $i=1,\cdots,n$.

2) 
For an $R$-sub-Gaussian random variable, $\psi^{-1}_+(y) = \psi^{-1}_-(y) =\sqrt{2R^2 y}$ is an upper bound on its CGF. The second inequality then follows from Theorem \ref{thm:main}.
\end{proof}

\begin{remark}
The condition that $\ell(w,Z)$ is $R$-sub-Gaussian under $Z\sim \mu$ for all $w \in \mathcal{W}$ in the first part of Proposition \ref{prop:ISMI} is the same as the one in Lemma \ref{lemma:original}, which is not stronger than the condition in the second part of Proposition \ref{prop:ISMI}. An example was given in \cite{negrea2019information} for this argument. Specifically, consider  $\mathcal{W} = \mathcal{Z} = \mathbb{R}$, with $\ell(w,z) = w+z$, and $(\widetilde{W},\widetilde{Z}) \sim \text{Cauchy} \otimes \mathcal{N}(0,1)$. Then, $\ell(w,Z)$ is 1-sub-Gaussian for any $w \in \mathcal{W}$, whereas $\ell(\widetilde{W},\widetilde{Z})$ does not even have bounded absolute first moment.
\end{remark}

The following proposition shows that the proposed ISMI bound is always no worse than the bound using $I(S;W)$ in Lemma \ref{lemma:original} and \cite[Theorem 2]{jiao2017dependence}. 
\begin{proposition}\label{prop:tight}
Suppose that $S=\{Z_1,\cdots,Z_n\}$ consists of $n$ independent samples, and $\psi^{*-1}$ is a concave function, then
\begin{equation}
   \frac{1}{n} \sum_{i=1}^n  \psi^{*-1}\bigg(I(W;Z_i)\bigg) \le \psi^{*-1}\bigg(\frac{I(S;W)}{n}\bigg).
\end{equation}
\end{proposition}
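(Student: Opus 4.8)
The plan is to establish the inequality
\[
\frac{1}{n}\sum_{i=1}^n \psi^{*-1}\big(I(W;Z_i)\big) \le \psi^{*-1}\bigg(\frac{I(S;W)}{n}\bigg)
\]
by combining two ingredients: concavity of $\psi^{*-1}$ (hence Jensen's inequality), and a sub-additivity property of the individual-sample mutual informations relative to $I(S;W)$. First I would apply Jensen's inequality to the concave function $\psi^{*-1}$ to get
\[
\frac{1}{n}\sum_{i=1}^n \psi^{*-1}\big(I(W;Z_i)\big) \le \psi^{*-1}\bigg(\frac{1}{n}\sum_{i=1}^n I(W;Z_i)\bigg).
\]
Since $\psi^{*-1}$ is also non-decreasing (it is the inverse of the non-decreasing function $\psi^*$, as recorded in Lemma~\ref{lemma:psi_star}), it then suffices to show the purely information-theoretic claim $\sum_{i=1}^n I(W;Z_i) \le I(S;W)$.

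The key step is therefore this last inequality, and it is where I expect the only real work to be. The plan is to use the chain rule for mutual information together with independence of the $Z_i$. Writing $S = (Z_1,\dots,Z_n)$, the chain rule gives
\[
I(S;W) = \sum_{i=1}^n I(Z_i;W \mid Z_1,\dots,Z_{i-1}).
\]
I would then argue that each conditional term dominates the corresponding unconditional term, i.e. $I(Z_i;W\mid Z_1,\dots,Z_{i-1}) \ge I(Z_i;W)$. This follows because the $Z_i$ are mutually independent: conditioning on $Z_1,\dots,Z_{i-1}$ — which are independent of $Z_i$ — can only increase the mutual information with $W$. Concretely, $I(Z_i;W,Z_1,\dots,Z_{i-1}) = I(Z_i;Z_1,\dots,Z_{i-1}) + I(Z_i;W\mid Z_1,\dots,Z_{i-1}) = I(Z_i;W\mid Z_1,\dots,Z_{i-1})$ by independence, while also $I(Z_i;W,Z_1,\dots,Z_{i-1}) \ge I(Z_i;W)$ by the data-processing/monotonicity of mutual information under adjoining variables. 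Summing over $i$ yields $I(S;W) \ge \sum_{i=1}^n I(W;Z_i)$.

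The main obstacle, such as it is, is being careful that these manipulations are valid when $I(S;W) = \infty$ (e.g. the ERM case mentioned in the introduction): there the right-hand side is $\psi^{*-1}(\infty)$ and the inequality is trivial, so one should just dispose of that case at the outset and assume $I(S;W)<\infty$ for the substantive argument. One should also note that concavity of $\psi^{*-1}$ is assumed in the statement rather than derived, so no effort is needed there; Lemma~\ref{lemma:psi_star} already supplies both the concavity and the monotonicity needed. Chaining the two displayed inequalities then completes the proof.
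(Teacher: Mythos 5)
Your proposal is correct and follows essentially the same route as the paper: chain rule $I(S;W)=\sum_i I(W;Z_i\mid Z^{i-1})$, then independence of the samples to show each conditional term dominates $I(W;Z_i)$, then Jensen's inequality together with monotonicity of $\psi^{*-1}$. Your explicit handling of the monotonicity step and of the $I(S;W)=\infty$ case is slightly more careful than the paper's, but the substance is identical.
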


\begin{proof}
By the chain rule of mutual information, 
\begin{align}
  I(W;S) = \sum_{i=1}^n I(W;Z_i|Z^{i-1})
\end{align}
where $Z^{j}=\{Z_1,\cdots,Z_{j}\}$.
Note that $Z_i$ and $Z^{i-1}$ are independent, i.e., $I(Z_i;Z^{i-1})=0$ , it then follows that
\begin{align}
  I(W;Z_i|Z^{i-1})  &= I(W;Z_i|Z^{i-1}) + I(Z_i;Z^{i-1}) \nn \\
  &= I(W,Z^{i-1};Z_i) \nn \\
  &= I(W;Z_i)+I(Z^{i-1};Z_i|W)  \nn \\
  &\ge I(W;Z_i).
\end{align}
Thus,
\begin{align}
  I(W;S) &= \sum_{i=1}^n I(W;Z_i|Z^{i-1})
  \ge \sum_{i=1}^n I(W;Z_i),
\end{align}
and applying Jensen's inequality completes the proof.
\end{proof}

\begin{remark} Under the sub-Gaussian condition (see sentence following Lemma \ref{lemma:original}), we can let $ \psi^{*-1}\big(y\big)=\sqrt{2R^2y}$. Then by Proposition \ref{prop:tight}, the ISMI bound in Proposition \ref{prop:ISMI} is always no worse than the bound based on $I(S;W)$ in Lemma \ref{lemma:original}.
\end{remark}
\begin{remark}
Following  arguments similar to those  used in the proof of $I(W;Z_i) \le I(W;Z_i|Z^{i-1})$, we can also show that $I(W;Z_i) \le I(W;Z_i|S^{-i})$, where $S^{-i}$ denotes the set obtained by deleting $Z_i$ from $S$. Therefore, the ISMI bound is always no worse than the bound based on  $I(W;Z_i|S^{-i})$ in \cite[Theorem 2]{raginsky2016information}. 
\end{remark}

In the next section, we will also show via several examples that the ISMI bound provides a more accurate characterization of the generalization error than the bound in Lemma \ref{lemma:original}  and the chaining bound in \cite{NIPS2018}.

\section{Examples with Infinite $I(W;S)$}\label{sec:deterministic}
In this section, we consider two examples of learning algorithms with infinite $I(W;S)$. We show that for these  examples, the upper bound on generalization error in Lemma \ref{lemma:original} blows up, whereas the ISMI bound in Theorem \ref{thm:main} still provides an accurate approximation. The details of the derivations of the bounds can be found in the Appendices.


\subsection{Estimating the Mean}\label{sec:mean}
We first consider the problem of learning the mean of a Gaussian random vector $Z \sim \mathcal{N}(\mu, \sigma^2 I_d)$, which minimizes the  square error $\ell(w,Z)\triangleq \|w-Z\|_2^2$.
The empirical risk with $n$ i.i.d. samples is
\begin{equation}
L_S(w)\triangleq\frac{1}{n} \sum_{i=1}^n \|w-Z_i\|_2^2, \quad w \in \mathbb{R}^d.
\end{equation}
The empirical risk minimization (ERM) solution is  the sample mean $W = \frac{1}{n} \sum_{i=1}^n Z_i$, which is deterministic given $S$. Its generalization error can be computed exactly as (see Appendix \ref{app:mean}):
\begin{align}\label{eq:mean_truth}
  \mathrm{gen}(\mu,P_{W|S}) 
   & = \frac{2\sigma^2 d}{n}.
\end{align}

The bound in Lemma \ref{lemma:original} is not applicable here due to the following two reasons:  (1)  $W$ is a deterministic function of $S$, and hence $I(S;W)=\infty$; and (2) since  $Z$ is a Gaussian random vector, the loss function $\ell(w,Z)=\|w-Z\|_2^2$ is not sub-Gaussian for all $w \in \mathbb{R}^d$. Specifically, the variance of the loss function $\ell(w,Z)$ diverges as $\|w\|_2 \to \infty$, which implies that a uniform upper bound on $\Lambda_{\ell(w,Z)}(\lambda)$, $\forall w\in \mathbb{R}^d$  does not exist.

We can get around both of these issues by applying the ISMI bound in Theorem \ref{thm:main}. Since $W\sim \mathcal{N}(\mu, \frac{\sigma^2I_d}{n})$, the mutual information between each individual sample and the output hypothesis $I(W;Z_i)$ can be computed exactly as (see Appendix \ref{app:mean}):
\begin{align}\label{eq:mean_MI}
  I(W;Z_i) 
  & = \frac{d}{2} \log \frac{n}{n-1}, \qquad i=1,\cdots,n, \quad n\ge 2.
\end{align}
In addition, since $W\sim \mathcal{N}(\mu, \frac{\sigma^2I_d}{n})$, it can be shown that $\ell(W,\widetilde{Z}) \sim \sigma_\ell^2 \chi^2_d$, where $\sigma_\ell^2 \triangleq \frac{(n+1)\sigma^2}{n}$, and $\chi^2_d$ denotes the chi-squared distribution with $d$ degrees of freedom. Note that the expectation of $\chi^2_d$ distribution is $d$ and its moment generating function is $(1-2\lambda)^{d/2}$. Therefore, the CGF of $\ell(\widetilde W,\widetilde Z)$ is given by
\begin{equation}
  \Lambda_{\ell(\widetilde W,\widetilde Z)}(\lambda) = - d \sigma_\ell^2 \lambda - \frac{d}{2} \log(1-2\sigma_\ell^2\lambda),
\end{equation}
for $\lambda \in (-\infty, \frac{1}{2\sigma_\ell^2})$.
Since $W$ is the ERM solution, it follows that $\mathrm{gen}(\mu,P_{W|S})\ge 0$, and we only need to consider the case $\lambda <0$. It can be shown that (see Appendix \ref{app:mean}):
\begin{equation}
  \Lambda_{\ell(\widetilde W,\widetilde Z)}(\lambda) \le d\sigma_\ell^4\lambda^2 \triangleq \psi_-(-\lambda),\quad \lambda<0.
\end{equation}
Then, $\psi_-^{*-1}(y) = 2 \sqrt{d\sigma_\ell^4 y}.$ 
Combining the results in \eqref{eq:mean_MI}, we have
\begin{align}
  \mathrm{gen}(\mu,P_{W|S})  \le  \sigma^2 d \sqrt{\frac{2(n+1)^2}{n^2} \log \frac{n}{n-1} }.
\end{align}
As $n \to \infty$, the above bound is $\mathcal{O}\left(\frac{1}{\sqrt{n}}\right)$, which is sub-optimal compared to the true generalization error computed in \eqref{eq:mean_truth}. We should note that techniques based on VC dimension \cite{boucheron2005theory} and algorithmic stability \cite{bousquet2002stability} also yield bounds of $\mathcal{O}\left(\frac{1}{\sqrt{n}}\right)$.

\subsection{Gaussian Process}\label{sec:GP}
In this subsection, we revisit the Gaussian process example studied in \cite{NIPS2018}. Let $\mathcal{W} = \{w \in \mathbb{R}^2:\|w\|_2=1 \}$, and $Z \sim \mathcal{N}(0, I_2)$ be a standard normal random vector in $\mathbb{R}^2$. The loss function is defined to be the following Gaussian process indexed by $w$:
\begin{equation}
  \ell(w,Z) \triangleq -\langle w,  Z\rangle,\quad  \forall w\in \mathcal{W}.
\end{equation}
Note that the loss function\footnote{The loss function can be negative here. We ignore the non-negativity assumption of the loss function; this does not affect our analysis.} $\ell(w,Z)$  is sub-Gaussian with parameter $R=1$ for all $w\in \mathcal{W}$. In addition, the output hypothesis $w\in \mathcal{W}$  can also be represented equivalently  using the phase of $w$. In other words, we can let $\phi$ be the unique number in $[0, 2\pi)$ such that $w = (\sin \phi, \cos \phi)$.
For this problem, the empirical risk of a hypothesis $w\in \mathcal{W}$ is given by
\begin{equation}
L_S(w) = - \frac{1}{n}\sum_{i=1}^n \langle w,Z_i\rangle.
\end{equation}
We consider two learning algorithms which are the same as the ones in \cite{NIPS2018}. The first is the ERM algorithm:
\begin{equation}
  W = \argmin_{\phi\in [0,2\pi)} L_S(w) = \argmax_{\phi\in [0,2\pi)} \big\langle w,  \frac{1}{n}\sum_{i=1}^n Z_i \big\rangle.
\end{equation}
The second is the ERM algorithm with additive noise:
\begin{equation}
  W' = \Big(\argmax_{\phi\in [0,2\pi)} \big\langle w,  \frac{1}{n}\sum_{i=1}^n Z_i\big\rangle \Big)\oplus \xi \  (\bmod\ 2\pi),
\end{equation}
where the noise $\xi$ is independent of $S$, and has an atom with probability mass $\epsilon$ at 0, and  probability $1- \epsilon$ uniformly distributed on $(-\pi,\pi)$. Due to the symmetry of the problem, $W$ and $W'$ are uniformly distributed over $[0, 2\pi)$.

The generalization error of the ERM algorithm $W$ can be computed exactly as (see Appendix \ref{app:GP}):
\begin{align}
  \mathrm{gen}(\mu,P_{W|S}) 
  =\sqrt{\frac{\pi}{2n}}.
\end{align} 
For the second algorithm $W'$, since the noise $\xi$ is independent from $S$, it follows that
\begin{equation}
  \mathrm{gen}(\mu,P_{W'|S})
  =\epsilon\sqrt{\frac{\pi}{2n}}.
\end{equation}

The  bound via $I(W;S)$ in Lemma \ref{lemma:original} is not applicable, since $W$ is deterministic given $S$ and $I(W;S)=\infty$.
Moreover, for the second algorithm $W'$,
\begin{align}
  I(W';S) &= h(W')- h(W'|S) \nn \\
  &=\log 2\pi - h(\xi) =\infty,
\end{align}
since $\xi$ has a singular component at 0, and $h(\xi) = -\infty$.



Applying the ISMI bound in Theorem \ref{thm:main} to the ERM algorithm $W$, we have that
\begin{align}\label{eq:GP_newbound}
  I(W;Z_i) &= h(W)- h(W|Z_i) \nn \\
  & = \log 2\pi - h(W|Z_i)\nn \\
  &=\log 2\pi - \mathbb{E}_{Z_i}[h(W|Z_i=z_i)],
\end{align}
which we need to compute the conditional distribution $P_{W|Z_i=z_i}$.
Note that given $Z_i=z_i$, the ERM solution
\begin{equation}
W = \argmax_{\phi\in [0,2\pi)} \langle w,  \frac{z_i}{n}+\frac{1}{n}\sum_{j\ne i} Z_i\rangle
\end{equation}
depends on the other samples $Z_j$, $j\ne i$.  Moreover, it can be shown that   $P_{W|Z_i=z_i}$ is equivalent to the phase distribution of a Gaussian random vector $\mathcal{N}(\frac{z_i}{n}, \frac{n-1}{n^2}I_2)$ in polar coordinates.

Due to symmetry, we can always rotate the polar coordinates, such that $z_i=(r,0)$, where $r \in \mathbb{R}^+$ is the $\ell_2$ norm of $z_i$. Then, $P_{W|Z_i=z_i}$ is a function of $r$, and can be equivalently characterized and computed by the distribution $f\big(\phi\big|\|Z_i\|=r\big)$ provided in Appendix \ref{app:GP}.
Since the norm of $Z_i$ has a Rayleigh distribution with unit variance, it then follows that
\begin{align}
  I(W;Z_i) 
  =\log 2\pi - \mathbb{E}_{\|Z_i\|}\Big[h\big(f(\phi\big|\|Z_i\|=r)\big)\Big].
\end{align}

Applying Theorem \ref{thm:main}, we obtain
\begin{equation}
  |\mathrm{gen}(\mu,P_{W|S}) |\le \frac{1}{n}\sum_{i=1}^n\sqrt{2 I(W;Z_i) }=\sqrt{2 I(W;Z_i) }.
\end{equation}
Similarly, we can compute the ISMI bound for $W'$.


\begin{figure}
	\centering
	\includegraphics[width=0.9\columnwidth]{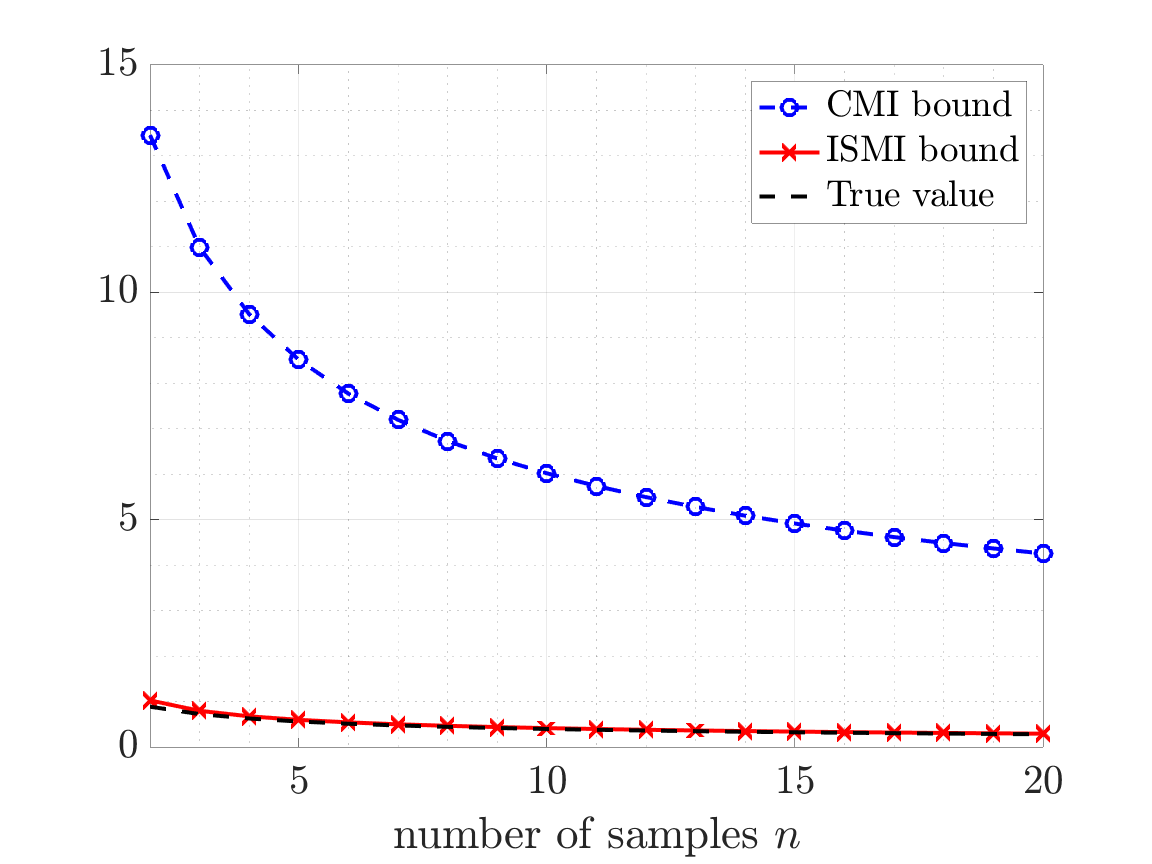}\\
	\caption{Comparison of generalization bounds for the ERM algorithm. }\label{fig:ERM}
\end{figure}
\begin{figure}
	\centering
	\includegraphics[width=0.9\columnwidth]{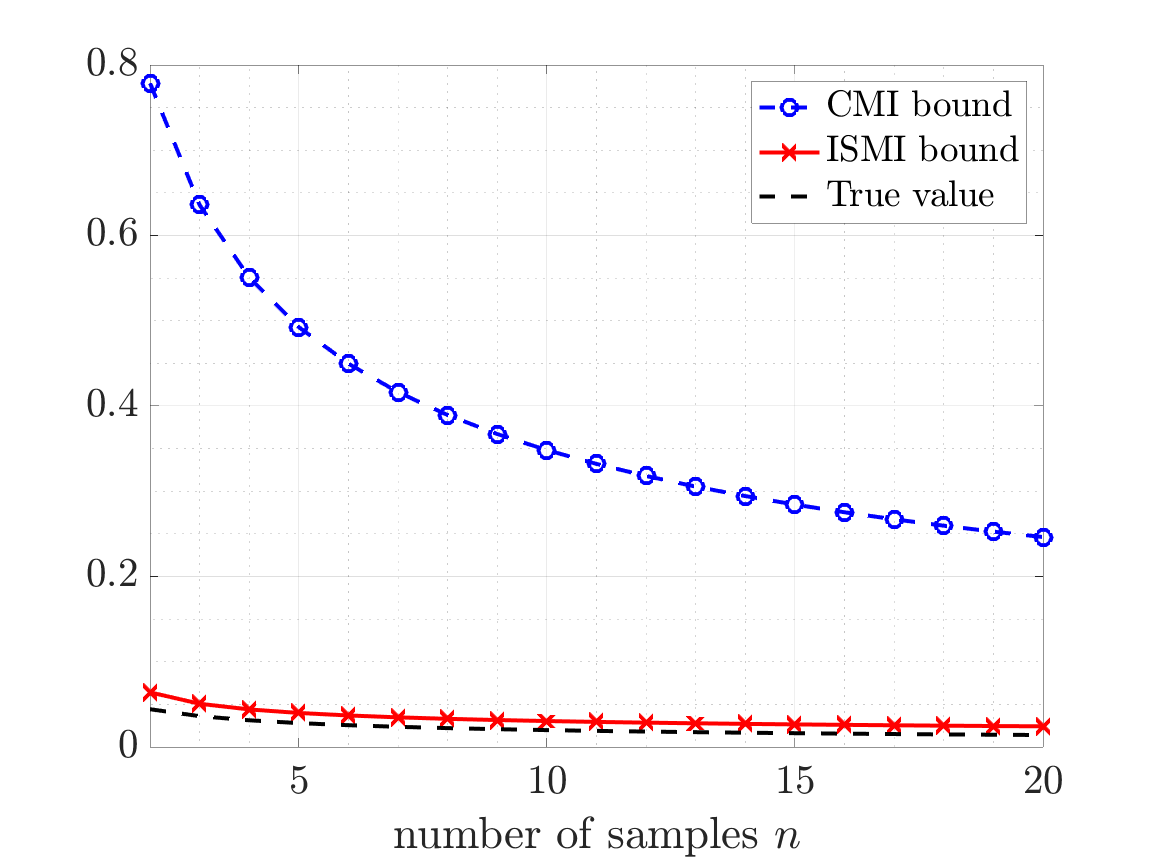}\\
	\caption{Comparison of different generalization bounds for the ERM algorithm with  additive noise $\epsilon=0.05$.}\label{fig:noisy}
\end{figure}

Numerical comparisons are presented in Fig. \ref{fig:ERM} and Fig. \ref{fig:noisy}. In both figures, we plot the ISMI bound, the CMI bound in  \cite{NIPS2018}, and the true values of the generalization error, as functions of the number of samples $n$. In Fig. \ref{fig:ERM}, we compare these bounds for the ERM solution $W$. Note that the CMI bound reduces to the classical chaining bound in this case. In Fig. \ref{fig:noisy}, we evaluate these bounds for the noisy algorithm $W'$ with $\epsilon=0.05$. Both figures demonstrate that the ISMI bound is closer to the true values of the generalization error, and outperforms the CMI bound significantly. More details about the computations of both bounds can be found in  Appendix \ref{app:GP}.

\section{Noisy, Iterative Algorithms}\label{sec:noisy}
In this section, we apply the ISMI bound in Theorem \ref{thm:main} to a class of noisy, iterative algorithms, specifically, stochastic gradient Langevin dynamics (SGLD).

%
\subsection{SGLD Algorithm}
We begin by introducing some notation to be used in this section. Denote the parameter vector at iteration $t$ by $W_{(t)} \in \mathbb{R}^d$, and let $W_{(0)} \in \mathcal{W}$ denote an arbitrary initialization. At each iteration $t \ge 1$, we sample a training data point $Z_{U_{(t)}} \in S$, where $U_{(t)}\in\{1,...,n\}$ denotes the random index of the sample selected at iteration $t$, and compute the gradient $\nabla \ell(W_{(t-1)},Z_{U_{(t)}})$. We then scale the gradient by a step size $\eta_{(t)}$ and perturb it by isotropic Gaussian noise $\xi\sim \mathcal{N}(0,I_d)$. The overall update rule is as follows \cite{welling2011bayesian}:
\begin{equation}
W_{(t)} = W_{(t-1)} - \eta_{(t)}  \nabla \ell(W_{(t-1)},Z_{U_{(t)}})+\sigma_{(t)}\xi,
\end{equation}
where $\sigma_{(t)}$ controls the variance of the Gaussian noise.

For $t \ge 0$, let $W^{(t)} \triangleq \{W_{(1)},\ \cdots,\ W_{(t)}\}$ and  $U^{(t)} \triangleq \{U_{(1)},\cdots,U_{(t)}\}$. We assume that the training process takes $K$ epochs, and the total number of iterations is $T=nK$. The output of the algorithm is $W=W_{(T)}$.

In the following, we use the same assumptions as in  \cite{Pensia2018ISIT}.
\begin{assum}
$\ell(w,Z)$ is $R$-sub-Gaussian with respect to $Z \sim \mu$, for every $w \in \mathcal{W}$.
\end{assum}

\begin{assum}\label{assum:bounded}
The gradients are bounded, i.e., $\sup_{w\in \mathcal{W}, z\in \mathcal{Z}} \|\nabla \ell(W,z)\|_2 \le L$, for some $L > 0$.
\end{assum}
In  \cite{Pensia2018ISIT}, the following bound was obtained by upper bounding $I(W;S)$ in Lemma \ref{lemma:original}.
\begin{lemma}[{\cite[Corollary 1]{Pensia2018ISIT}}]\label{lemma:SGLD}
The generalization error of the SGLD algorithm is bounded by
\begin{equation}
  |\mathrm{gen}(\mu, P_{W|S})| \le \sqrt{\frac{R^2}{n}\sum_{t=1}^T \frac{\eta_t^2L^2}{\sigma_t^2}}.
\end{equation}
\end{lemma}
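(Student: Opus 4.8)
The plan is to combine Lemma~\ref{lemma:original} with a trajectory-based upper bound on $I(S;W)$. Since $\ell(w,Z)$ is $R$-sub-Gaussian under $Z\sim\mu$ for every $w\in\mathcal{W}$ (Assumption~1), Lemma~\ref{lemma:original} already gives $|\mathrm{gen}(\mu,P_{W|S})|\le\sqrt{(2R^2/n)\,I(S;W)}$, so the whole task reduces to establishing the estimate $I(S;W)\le\tfrac12\sum_{t=1}^{T}\eta_t^2 L^2/\sigma_t^2$; substituting this into Lemma~\ref{lemma:original} and absorbing the factor $2$ then yields exactly the claimed bound.

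To bound $I(S;W)$, I first note that the output $W=W_{(T)}$ is a deterministic function of the whole trajectory $W^{(T)}=(W_{(1)},\dots,W_{(T)})$, so the data-processing inequality gives $I(S;W)\le I(S;W^{(T)})$, and the chain rule for mutual information decomposes this as $I(S;W^{(T)})=\sum_{t=1}^{T}I(S;W_{(t)}\mid W^{(t-1)})$. I then bound each summand separately. Fix $t$ and condition on $W^{(t-1)}=w$. The update rule reads $W_{(t)}=w-\eta_t\nabla\ell(w,Z_{U_{(t)}})+\sigma_t\xi_t$ with a fresh Gaussian perturbation $\xi_t\sim\mathcal{N}(0,I_d)$ independent of $(S,U_{(t)},W^{(t-1)})$. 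Enlarging the left variable and then applying the (conditional) data-processing inequality along the Markov chain $(S,U_{(t)})\to X_t\to W_{(t)}$, where $X_t\triangleq-\eta_t\nabla\ell(w,Z_{U_{(t)}})$, gives $I(S;W_{(t)}\mid W^{(t-1)}=w)\le I(S,U_{(t)};W_{(t)}\mid W^{(t-1)}=w)\le I(X_t;X_t+\sigma_t\xi_t)$.

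The analytic heart of the argument is the Gaussian-channel estimate $I(X;X+\sigma\xi)\le\mathbb{E}\|X\|_2^2/(2\sigma^2)$ for $\xi\sim\mathcal{N}(0,I_d)$ independent of $X$: writing $I(X;X+\sigma\xi)=h(X+\sigma\xi)-h(\sigma\xi)$, bounding $h(X+\sigma\xi)$ by the differential entropy of a Gaussian with the same covariance, and using $\log(1+x)\le x$, one obtains $I(X;X+\sigma\xi)\le\tfrac12\sum_{j}\log\big(1+\lambda_j/\sigma^2\big)\le\operatorname{tr}\big(\operatorname{Cov}(X)\big)/(2\sigma^2)\le\mathbb{E}\|X\|_2^2/(2\sigma^2)$, where $\lambda_1,\dots,\lambda_d$ are the eigenvalues of $\operatorname{Cov}(X)$. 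Since $\mathbb{E}\|X_t\|_2^2=\eta_t^2\,\mathbb{E}\|\nabla\ell(w,Z_{U_{(t)}})\|_2^2\le\eta_t^2L^2$ by Assumption~\ref{assum:bounded}, we get $I(S;W_{(t)}\mid W^{(t-1)}=w)\le\eta_t^2L^2/(2\sigma_t^2)$, and averaging over $w\sim P_{W^{(t-1)}}$ preserves this bound. Summing over $t=1,\dots,T$ gives $I(S;W)\le\tfrac12\sum_{t=1}^{T}\eta_t^2L^2/\sigma_t^2$, and plugging into Lemma~\ref{lemma:original} finishes the proof.

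I expect the main obstacle to be the bookkeeping around the random minibatch index $U_{(t)}$ and the nested conditioning: one must verify that $\xi_t$ is genuinely independent of everything generated through iteration $t$ so that the conditional data-processing inequalities and the Markov relation $(S,U_{(t)})\to X_t\to W_{(t)}$ are legitimate, and that enlarging the conditioned pair from $S$ to $(S,U_{(t)})$ only increases the mutual information. The Gaussian-channel inequality itself (maximum differential entropy under a covariance constraint) and the telescoping via the chain rule are routine.
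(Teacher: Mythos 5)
Your proof is correct, and it follows essentially the same route as the cited source \cite{Pensia2018ISIT} and as this paper's own Proposition~\ref{prop:SGLD}: decompose the mutual information over the iterates via the chain rule and data processing, bound each per-step term by the Gaussian max-entropy estimate $I(X;X+\sigma\xi)\le \frac{d}{2}\log(1+\mathbb{E}\|X\|_2^2/(d\sigma^2))\le \mathbb{E}\|X\|_2^2/(2\sigma^2)$ using the uniform gradient bound of Assumption~\ref{assum:bounded}, and then feed $I(S;W)\le\frac12\sum_t\eta_t^2L^2/\sigma_t^2$ into Lemma~\ref{lemma:original}. Note that this lemma is only quoted here (the paper does not reprove it), but your independence bookkeeping for $\xi_t$ and the Markov chain $(S,U_{(t)})\to X_t\to W_{(t)}$ is sound, so the argument is complete.
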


\subsection{ISMI Bound for SGLD}
We have the following proposition which characterizes the ISMI bound for the SGLD algorithm.
\begin{proposition}\label{prop:SGLD}
Suppose Assumption 1 and 2 hold, then we have the following ISMI bound on the generalization error for SGLD algorithm,
\begin{align}
|\mathrm{gen}(\mu,P_{W|S})| 
\le \mathbb{E}_{U^{(T)}} \bigg[\frac{R}{n} \sum_{i=1}^n \sqrt{ \sum_{\tau \in \gT_i(U^{(T)})}  \frac{\eta_{(\tau)}^2 L^2}{  \sigma_{(\tau)}^2} } \bigg],
\end{align}
where $U^{(T)}$ denotes the random sample path, and $\gT_i(U^{(T)})$ denote the set of iterations for which samples $Z_i$ is selected for a given sample path $U^{(T)}$.
\end{proposition}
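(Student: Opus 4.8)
The plan is to follow the same strategy used to prove Theorem \ref{thm:main}, but to apply the individual-sample decoupling estimate conditionally on the sampling path $U^{(T)}$, and then to average over that path. First I would condition on a fixed realization $u^{(T)}$ of the random indices $U^{(T)}$. Given this path, each training sample $Z_i$ enters the trajectory only through the gradient steps $\tau \in \gT_i(u^{(T)})$; all other update steps are statistically independent of $Z_i$ (they use other samples and fresh Gaussian noise). The key intermediate claim is therefore a conditional mutual information bound of the form
\begin{equation}
I(W;Z_i \mid U^{(T)}=u^{(T)}) \le \sum_{\tau \in \gT_i(u^{(T)})} \frac{\eta_{(\tau)}^2 L^2}{2\sigma_{(\tau)}^2},
\end{equation}
which I would establish by the same chain-rule-plus-Gaussian-channel argument as in \cite{Pensia2018ISIT}: decompose $I(W;Z_i\mid u^{(T)})$ along the iterations using the data-processing inequality for the Markov chain $W_{(0)}\to W_{(1)}\to\cdots\to W_{(T)}$, bound the per-step contribution of a step $\tau \in \gT_i$ by the capacity of an additive Gaussian noise channel with input $-\eta_{(\tau)}\nabla\ell(W_{(\tau-1)},Z_i)$ (whose norm is at most $\eta_{(\tau)}L$ by Assumption \ref{assum:bounded}) and noise variance $\sigma_{(\tau)}^2$ per coordinate, giving $\tfrac{\eta_{(\tau)}^2L^2}{2\sigma_{(\tau)}^2}$, and noting that steps $\tau\notin\gT_i$ contribute zero since the update at those iterations is conditionally independent of $Z_i$.

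Next I would apply the ISMI bound of Theorem \ref{thm:main} (equivalently Proposition \ref{prop:ISMI}, part 1) in the conditional world: under Assumption 1, $\ell(w,Z)$ is $R$-sub-Gaussian for every $w$, so conditionally on $U^{(T)}=u^{(T)}$ we get
\begin{equation}
\big|\mathrm{gen}(\mu,P_{W|S,U^{(T)}=u^{(T)}})\big| \le \frac{1}{n}\sum_{i=1}^n \sqrt{2R^2\, I(W;Z_i\mid U^{(T)}=u^{(T)})} \le \frac{R}{n}\sum_{i=1}^n \sqrt{\sum_{\tau\in\gT_i(u^{(T)})}\frac{\eta_{(\tau)}^2L^2}{\sigma_{(\tau)}^2}}.
\end{equation}
Finally, since $U^{(T)}$ is independent of the data $S$, the overall generalization error is the expectation of the conditional generalization error over $U^{(T)}$; taking $\mathbb{E}_{U^{(T)}}$ of both sides and using the triangle inequality (or Jensen on $|\cdot|$) yields
\begin{equation}
|\mathrm{gen}(\mu,P_{W|S})| \le \mathbb{E}_{U^{(T)}}\bigg[\frac{R}{n}\sum_{i=1}^n\sqrt{\sum_{\tau\in\gT_i(U^{(T)})}\frac{\eta_{(\tau)}^2L^2}{\sigma_{(\tau)}^2}}\bigg],
\end{equation}
which is the claimed bound.

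The main obstacle I anticipate is the rigorous justification of the conditional per-iteration mutual information bound, and in particular verifying that, conditioned on the sample path, the ISMI machinery applies cleanly: one must be careful that conditioning on $U^{(T)}$ does not destroy the i.i.d. structure of $S$ (it does not, since $U^{(T)}\perp S$), and that the decomposition over iterations correctly isolates only the steps in $\gT_i$. Some care is also needed to argue that $W$ is a (randomized) function of $S$ and $U^{(T)}$ and the noise variables, so that the decoupling lemma is being applied to the correct pair $(W, Z_i)$ with the correct conditional and product marginals; once this bookkeeping is set up, the remaining steps are routine and parallel the argument in \cite{Pensia2018ISIT} with $I(S;W)$ replaced by the per-sample quantities.
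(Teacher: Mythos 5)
Your proposal is correct and follows essentially the same route as the paper's proof: condition on the sample path $u^{(T)}$, apply the ISMI decoupling bound conditionally to get $\frac{1}{n}\sum_i\sqrt{2R^2 I(W;Z_i\mid U^{(T)}=u^{(T)})}$, bound that conditional mutual information via the chain rule over iterations (only $\tau\in\gT_i$ contributing) and a per-step Gaussian max-entropy/channel-capacity estimate $\frac{d}{2}\log\bigl(1+\frac{\eta_{(\tau)}^2L^2}{d\sigma_{(\tau)}^2}\bigr)\le\frac{\eta_{(\tau)}^2L^2}{2\sigma_{(\tau)}^2}$, and finally average over $U^{(T)}$. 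The paper's argument is the same in every essential step.
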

\begin{proof}
	To apply the ISMI bound for SGLD, we modify the result in Theorem \ref{thm:main} by conditioning on the random sample path $U^{(T)}$,
	\begin{align}\label{eq:SGLD_gen}
	&|\mathrm{gen}(\mu,P_{W|S})| \nn \\
	&  = \Big|\mathbb{E}_{U^{(T)}}  \Big[\frac{1}{n} \sum_{i=1}^n \Big(\mathbb{E}_{W,\widetilde Z}[\ell(W,\widetilde Z)|U^{(T)}] \nn\\
    &\qquad \qquad \qquad \qquad \qquad - \mathbb{E}_{W,Z_i} [\ell(W,Z_i)|U^{(T)}] \Big) \Big] \Big| \nn\\
	& \le \frac{1}{|\mathcal{U}|} \sum_{u^{(T)} \in \mathcal{U}} \Big(\frac{1}{n} \sum_{i=1}^n \sqrt{2R^2  I(W;Z_i| U^{(T)} = u^{(T)})}\Big),
	\end{align}
	where $\mathcal{U}$ denotes the set of all possible sample paths,
	and $I(W;Z_i|U^{(T)}= u^{(T)})$ is the mutual information \footnote{Note that this mutual information is different from the conditional mutual information $I(W;Z_i|U^{(T)}) = \mathbb{E}_{U^{(T)}}[I(W;Z_i|U^{(T)}= u^{(T)})]$.} with conditional distribution $P(W,Z_i| U^{(T)}= u^{(T)})$.
	
	
	Let $\gT_i(u^{(T)})$ denote the set of iterations for which sample $Z_i$ is selected for a given sample path $u^{(T)}$. Using the chain rule of mutual information, we have
	\begin{align}
	&I(W;Z_i| U^{(T)}= u^{(T)} ) \nn \\
	& \le  I(Z_i;  W^{(T)}|U^{(T)}= u^{(T)})\nn\\
	& = \sum_{\tau =1}^T I(Z_i; W_{(\tau)}|W_{(\tau-1)}, U^{(T)}= u^{(T)})\nn\\
	& = \sum_{\tau \in \gT_i(u^{(T)})} I(Z_i; W_{(\tau)}|W_{(\tau-1)}, U^{(T)}= u^{(T)}),
	\end{align}
	where the last equality is due to the fact that given $u^{(T)}$ and $W_{(\tau-1)}$, $Z_i$ is independent of $W_{(\tau)}$, if $\tau \notin \gT_i(u^{(T)})$. For $\tau \in \gT_i(u^{(T)})$, i.e., if $Z_i$ is selected at iteration $\tau$, we have
	\begin{align*}
	&I(Z_i; W_{(\tau)}|W_{(\tau-1)}, U^{(T)}= u^{(T)})\nn \\
	&= h\big(\eta_{(\tau)} \nabla \ell(W_{(\tau-1)},Z_{i})+\sigma_{(\tau)}\xi |W_{(\tau-1)} \big)
	- h(\sigma_{(\tau)}\xi).
	\end{align*}
	Since we assume that $\sup_{w\in \mathcal{W}, Z\in \mathcal{Z}} \|\nabla \ell(W,Z)\|_2 \le L $, we have
	\begin{align}
	&h\big(\eta_{(\tau)} \nabla \ell(W_{(\tau-1)},Z_{i})+\sigma_{(\tau)}\xi |W_{(\tau-1)} \big) \nn \\
    & \le h\big(\eta_{(\tau)} \nabla \ell(W_{(\tau-1)},Z_{i})+\sigma_{(\tau)}\xi \big)\nn\\
	& \le \frac{d}{2} \log \big(2 \pi e \frac{\eta_{(\tau)}^2 L^2+d\sigma_{(\tau)}^2}{ d} \big).
	\end{align}
	Due to the fact that $\xi$ is an independent Gaussian noise,  $h(\sigma_{(\tau)}\xi |W_{(\tau-1)})= \frac{d}{2}\log \big(2 \pi e \sigma_{\tau}^2\big)$, we have
	\begin{align*}
	I(Z_i; W_{(\tau)}|W_{(\tau-1)},U^{(T)}= u^{(T)}) \le \frac{d}{2} \log \big(1+ \frac{\eta_{(\tau)}^2 L^2}{ d \sigma_{(\tau)}^2} \big).
	\end{align*}
	
	Combining with \eqref{eq:SGLD_gen}, it follows that
	\begin{align}
	|\mathrm{gen}(\mu,P_{W|S})| 
	\le \mathbb{E}_{U^{(T)}} \bigg[\frac{R}{n} \sum_{i=1}^n \sqrt{ \sum_{\tau \in \gT_i(U^{(T)})}  \frac{\eta_{(\tau)}^2 L^2}{  \sigma_{(\tau)}^2} } \bigg],
	\end{align}
	where we remove the $\log$ term by using $\log(1+x)\le x$.
\end{proof}

To compare the result of the ISMI bound in Proposition \ref{prop:SGLD} and the bound in Lemma \ref{lemma:SGLD}, we  specify the parameters in the SGLD algorithms.
As in \cite{Pensia2018ISIT}, we set $\eta_{(t)} = \frac{c}{t}$, and $\sigma_{(t)} = \sqrt{\eta_t}$. We use the following ``without replacement'' sampling scheme for SGLD to further simplify the computation. Specifically, for the $k$-th training epoch, i.e., from the $((k-1)n+1)$-th to $kn$-th iterations, all training samples in $S$ are used exactly once.

Then, the ISMI bound can be further bounded as follows:
\begin{align}
  &|\mathrm{gen}(\mu,P_{W|S})|\nn \\
  & \le \frac{R L }{n} \mathbb{E}_{U^{(T)}}\Big[ \sum_{i=1}^n \sqrt{ \sum_{\tau \in \gT_i(U^{(T)})}  \frac{c}{ \tau} }  \Big] \nn\\
  & \overset{(a)}{\le} \frac{R L\sqrt{c}}{n} \sum_{i=1}^n \sqrt{ \frac{1}{i} + \sum_{k=1}^{K-1}  \frac{1}{ nk} }\nn\\
  & \overset{(b)}{\le} \frac{R L \sqrt{c}}{n} \sum_{i=1}^n \sqrt{ \frac{1}{i} + \frac{ \log(K-1)+1}{n} }\nn\\
  & \overset{(c)}{\le} \frac{R L }{\sqrt{n}} \Big(\sqrt{{c\log(K-1)+c}} + o(\log\log K) \Big),
\end{align}
where $(a)$ follows from the sampling scheme that all samples are used exactly once in each epoch;  $(b)$ is due to the fact that $\sum_{k=1}^K \frac{1}{k} \le \log (K) +1$; and $(c)$ follows by computing the integral $\int_0^1 \sqrt{ \frac{1}{x} + 1+ \log(K-1) } dx$.

Comparing with the bound in \cite{Pensia2018ISIT},
\begin{equation}
  |\mathrm{gen}(\mu, P_{W|S})| \le \frac{R L }{\sqrt{n}} \sqrt{{c\log(nK)+c}},
\end{equation}
it can be seen that our bound is tighter by a factor of $\sqrt{\log n}$ with the ``without replacement'' sampling scheme.

\begin{remark}
We note that for the typical use of SGLD, the standard deviation of the noise is  $\sigma_t = \sqrt{2 \eta_t/\beta_t}$, where $\beta_t$ denotes the inverse temperature at iteration $t$, and it is often set to be $\Theta(n)$. It is clear that $\beta=\Theta(n)$ will lead to a generalization bound that does not decay in $n$. Here, we choose $\beta_t = 2$ for comparison with the bound in \cite{Pensia2018ISIT}, while in practice $\beta$ may be a function of $n$ and grow with $t$. An analysis of the generalization error bound of SGLD for arbitrary choices of $\beta_t$ can be found in \cite{negrea2019information}.
\end{remark}

\section{Empirical Evaluation of ISMI Bound for Logistic Regression}
\label{sec:eva}
For some learning algorithms applied in practice, it is difficult to analytically characterize  $P_{W|S}$, which makes the analytical evaluation of the ISMI bound challenging. In this section, we provide such an example, that of  logistic regression, for which it is difficult to analytically characterize the learning algorithm via the conditional distribution $P_{W|S}$. We therefore empirically evaluate the ISMI bound via a mutual information estimator, and compare it to an empirical evaluation of the generalization error. We further note that the ISMI bound is much  easier to estimate than the bound in Lemma \ref{lemma:original} and the chaining bound in \cite{NIPS2018} due to the significant reduction in dimension that comes from estimating $I(Z_i;W)$ instead of $I(S;W)$.


Consider the binary classification problem, where the samples $Z=(X,Y)$, consisting of features $X\in \mathbb{R}^d$ and labels $Y \in \{\pm 1\}$. We assume that training samples are generated from the following distribution,
\begin{equation}
    X\sim \mathcal{N}(\mu_{Y},\Sigma),\quad Y \in \{\pm 1\}, \quad \mu_{Y}\in \mathbb{R}^d.
\end{equation}
The marginal distribution of $X$ is the mixture of two Gaussian distributions, and we assume that $P(Y=-1)=P(Y=1)=1/2$.

A binary classifier is constructed as follows:
\begin{align}
  \hat{Y} = \left\{
              \begin{array}{ll}
                1, & \hbox{ $w^T X \ge 0$;} \\
                -1, & \hbox{else.}
              \end{array}
            \right.
\end{align}
We adopt classification error $\ell(w,Z)= \mathds{1}_{\{Y\ne \hat{Y}\}}$ to compute the generalization error, then the empirical risk with $n$ i.i.d. samples is
\begin{equation}
  L_S(w) = \frac{1}{n}\sum_{i=1}^n \mathds{1}_{\{Y_i \ne \hat{Y_i}\}}.
\end{equation}

Since the empirical risk function is not differentiable,  we learn $W$  by minimizing the following loss function of logistic regression:
\begin{equation}\label{eq:logistic_loss}
  W = \argmin_{w\in \mathcal{W}} \frac{1}{n} \sum_{i=1}^n \log(1+e^{-Y_i w^T X_i}).
\end{equation}
In general, it is difficult to obtain a  closed form solution for this optimization problem, and therefore, \eqref{eq:logistic_loss} is usually solved  numerically. This makes it difficult  to analytically characterize the conditional distribution $P_{W|Z_i}$, which in turn makes it challenging to compute the generalization error and the ISMI bound analytically.

Alternatively, we can empirically estimate the generalization error and the ISMI bound. Specifically, we train $W$ for $N$ times using $N$ sets of independent samples, and we use the K-nearest neighbor based mutual information estimator  \cite{kraskov2004estimating,gao2018demystifying} to estimate $I(W;Z_i)$  with $N$ i.i.d. samples of $W$ and $Z_i$. Note that the K-nearest neighbor based mutual information estimator is consistent, and its  mean squared estimation error can be upper bounded by $\mathcal O(N^{-\frac{2}{d_W+d_Z}})$, where $d_W=d$ is the dimension of the weights $W$, and $d_Z=d+1$ is the dimension of $Z$  \cite{gao2018demystifying}. Moreover, since we use classification error to compute generalization error, $\ell(W,Z)$ is bounded by 1. Then, by Hoeffding's lemma,
$\ell(W,Z)$ is $\frac{1}{2}$-sub-Gaussian. Thus, the ISMI bound can be estimated by
\begin{equation}\label{eq:eva}
 \frac{1}{n} \sum_{i=1}^n \sqrt{\frac{\hat{I}(W;Z_i)}{2}},
\end{equation}
where $\hat{I}(W;Z_i)$ is the estimate of $I(W;Z_i)$. If we apply an optimization algorithm that does not depend on the order of the samples, e.g., gradient descent and stochastic gradient descent with random shuffling, we then only need to estimate one
$\sqrt{\frac{\hat{I}(W;Z_1)}{2}}$ instead of estimating $\hat{I}(W;Z_i)$ for all $1\leq i\leq n$.

\begin{figure}
  \centering
  \includegraphics[width=0.95 \columnwidth]{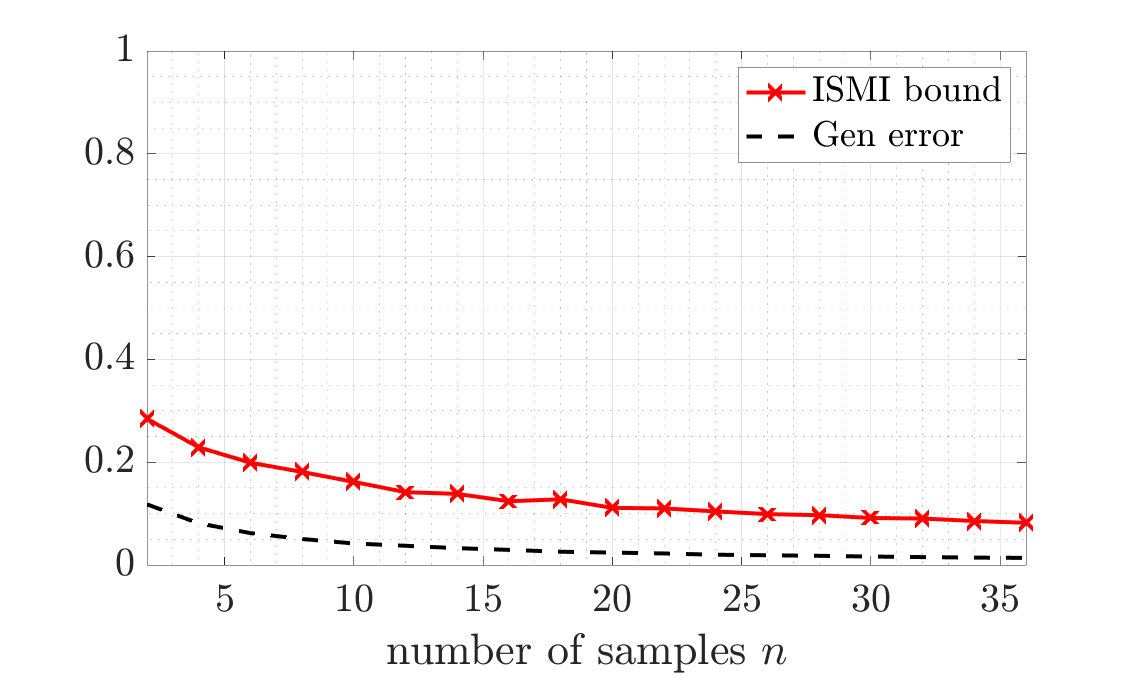}\\
  \caption{Empirical evaluation of ISMI bound and generalization error in Logistic regression.}\label{fig:LR}
\end{figure}

We note that the bound in Lemma \ref{lemma:original} is difficult to estimate due to the high dimension of $S$, which scales linearly with $n$. Specifically, the training dataset $S$ consists of $n$ samples, and therefore  $I(W;S)$ is the mutual information between two random vectors with dimensions $d$ and $n(d+1)$. As shown in \cite{gao2018demystifying}, due to the curse of dimensionality, it is impossible to construct a consistent mutual information estimator for large $n$. We also note that the exact computation of Wasserstein distances is costly in general, as it requires the solution of an optimal transport problem\cite{rowland2019orthogonal}. Moreover, similar high dimensional issue makes it even more difficult to directly estimate $\mathbb{W}(P_W,P_W|S)$ in the Wasserstein distance based generalization bound in \cite{lopez2018generalization,ISIT2019wass}. 

In  Fig. \ref{fig:LR}, we plot an empirical estimate of the ISMI bound using \eqref{eq:eva}, and compare it to the generalization error. In the simulation, we chose the following model parameters: $d=2$ and $\mu_1=(1,1)$, $\mu_{-1}=(-1,-1)$ with $\Sigma=4I$. We used the K-nearest neighbor based mutual information estimator (revised KSG estimator) in \cite{gao2018demystifying} with $N=5000$ i.i.d. samples. It can be seen that the ISMI bound has a similar convergence behavior as the true generalization error as number of training samples $n$ increases.

\section{Conclusions}
In this paper, we proposed a tighter information-theoretic upper bound on the generalization error using the mutual information $I(Z_i;W)$ between each individual training sample $Z_i$ and the output hypothesis $W$ of the learning algorithm. We showed that compared to existing studies, our bound is more broadly applicable, and is considerably tighter. More importantly, the individual sample mutual information is between two vectors whose dimensions do not scale with the sample size $n$. Therefore, unlike the existing bounds  in \cite{NIPS2018,ISIT2019wass,xu2017information},  the ISMI bound can easily be evaluated empirically in practice.
As suggested by recent works, the proposed ISMI bound could be further improved by combining with the  chaining method \cite{asadi2019chaining}, or data-dependent estimates \cite{negrea2019information}. The proposed information-theoretic framework can also be used to guide model compression in  deep learning \cite{bu2019information}.



\appendices
\section{Section \ref{sec:mean} Details} \label{app:mean}
\subsection{Generalization Error}
For this example, the generalization can be computed as
\begin{align}
  &\mathrm{gen}(\mu,P_{W|S}) \nn \\
  & =\mathbb{E}_{W, S}[L_\mu(W)-L_S(W)] \nn \\
   & =  \mathbb{E}_{S}\Big[\mathbb{E}_{\widetilde{Z}}[\| \bar{Z}-\widetilde{Z}\|_2^2] - \frac{1}{n} \sum_{i=1}^n \| \bar{Z}-Z_i\|_2^2 \Big] \nn \\
   & = \mathbb{E}_{S,\widetilde{Z}}[{\rm{Tr}}((\bar{Z}-\widetilde{Z})(\bar{Z}-\widetilde{Z})^\top)] \nn \\
   & \qquad - \mathbb{E}_{S}\Big[\frac{1}{n} \sum_{i=1}^n {\rm{Tr}}((\bar{Z}-Z_i)(\bar{Z}-Z_i)^\top)\Big]\nn \\
   & ={\rm{Tr}}({\rm{Cov}}[\bar{Z}])+{\rm{Tr}}({\rm{Cov}}[\widetilde{Z}]) - \frac{n-1}{n}{\rm{Tr}}({\rm{Cov}}[Z]) \nn \\
   & = \frac{2}{n} {\rm{Tr}}({\rm{Cov}}[Z])
\end{align}
Since ${\rm{Cov}}[Z]= \sigma^2I_d$, we have $\mathrm{gen}(\mu,P_{W|S}) =\frac{2\sigma^2 d}{n}$.

\subsection{Individual Mutual Information}
We note that both $W$ and $n$ i.i.d. samples $Z_i$ are Gaussian, then the individual mutual information $I(W;Z_i)$ is a function captured by the following covariance matrix,
\begin{equation}
  {\rm{Cov}}[W,X_i] = \left(
    \begin{array}{cc}
      \Sigma/n & \Sigma/n \\
      \Sigma/n & \Sigma \\
    \end{array}
  \right).
\end{equation}
Then, we have
\begin{align}
  I(W;X_i) &= \frac{1}{2} \log \frac{|{\rm{Cov}}[W]| |{\rm{Cov}}[X_i]| }{ |{\rm{Cov}}[W,X_i]| } \nn\\
  & = \frac{1}{2} \log \frac{ |\Sigma/n| |\Sigma| }{ |\frac{n-1}{n^2}\Sigma| |\Sigma|  } \nn \\
  & = \frac{d}{2} \log \frac{n}{n-1},
\end{align}
for all $i=1,\dots,n$.

\subsection{Upper bound for CGF}
Note that the CGF of $\ell(\widetilde{W},\widetilde{Z})$ is given by
\begin{align}
\Lambda_{\ell(\widetilde W,\widetilde Z)}(\lambda) &= - d \sigma_\ell^2 \lambda - \frac{d}{2} \log(1-2\sigma_\ell^2\lambda) \nn \\
& = \frac{d}{2} (-u-\log(1-u)), \  \lambda \in (-\infty, \frac{1}{2\sigma_\ell^2}),
\end{align}
where $u\triangleq2\sigma_\ell^2\lambda$. Further note  that
\begin{equation}
-u-\log(1-u) \le \frac{u^2}{2},\ u<0.
\end{equation}
We therefore have the following upper bound on the CGF of $\ell(\widetilde{W},\widetilde{Z})$:
\begin{equation}
\Lambda_{\ell(\widetilde W,\widetilde Z)}(\lambda) \le d\sigma_\ell^4\lambda^2,\quad \lambda<0.
\end{equation}

\section{Section \ref{sec:GP} Details}\label{app:GP}
\subsection{Generalization Error}
Note that the expectation of the population risk is
\begin{align}
 \mathbb{E}_{W, S}[L_\mu(W)] = \mathbb{E}_{W,Z}[-\langle W,Z \rangle]=0,
\end{align}
since $W$ and $Z$ are independent. Then, the generalization error can be computed as
\begin{align}\label{equ:gen_GP}
&\mathrm{gen}(\mu,P_{W|S}) = \mathbb{E}[-L_S(W)] \nn \\
&= \mathbb{E}_{W, S}  [ \langle W,  \frac{1}{n}\sum_{i=1}^n Z_i\rangle]\nn \\
&=  \mathbb{E}_{W, S}    \Big\| \frac{1}{n}\sum_{i=1}^n Z_i \Big\|_2 =\sqrt{\frac{\pi}{2n}},
\end{align}
where the last step is due to the fact that the distribution of $\| \frac{1}{n}\sum_{i=1}^n Z_i \|_2$ is $\mathrm{Rayleigh}(\frac{1}{n})$.

\subsection{Individual Sample Mutual Information Bound}
To compute the ISMI bound, we need the conditional distribution $P_{W|Z_i=z_i}$.
Note that given $Z_i=z_i$, the ERM solution is
\begin{equation*}
W = \argmax_{\phi\in [0,2\pi)} \langle w,  \frac{z_i}{n}+\frac{1}{n}\sum_{j\ne i} Z_i\rangle.
\end{equation*}
Also note that since $\phi\in [0,2\pi)$,  $P_{W|Z_i=z_i}$ is equivalent to the phase distribution of a Gaussian random vector $\mathcal{N}(\frac{z_i}{n}, \frac{n-1}{n^2}I_2)$ in polar coordinates. 
Since entropy is shift-invariant, we can always rotate the polar coordinates, such that $z_i=(r,0)$, where $r \in \mathbb{R}^+$ is the $\ell_2$ norm of $z_i$.

The joint distribution of  radius and phase $(\rho,\phi)$ in polar coordinates can be obtained by applying the Jacobian method to the Gaussian distribution $\mathcal{N}(\frac{(r,0)}{n}, \frac{n-1}{n^2}I_2)$, and we have
\begin{align}
f\big(\rho, \phi\ \big|\|Z_i\|=r\big) = \frac{n^2\rho}{2\pi(n-1)}e^{-\frac{n^2\rho^2+r^2}{2(n-1)}}e^{-\frac{nr\rho \cos \phi}{(n-1)}},
\end{align}
for $\rho\in [0,\infty),\ \phi \in [0,2\pi)$.

Then, the marginal distribution of $\phi$ can be computed by integrating out $\rho$ from the joint distribution $f\big(\rho, \phi\big|\|Z_i\|=r\big)$:
\begin{align*}
&f\big(\phi\big|\|Z_i\|=r\big) \nn \\
&= \int_{0}^\infty f\big(\rho, \phi\ \big|\|Z_i\|=r\big)d\rho \nn \\
&= \frac{1}{2 \pi} e^{-\frac{r^2}{2(n-1)}}
+\frac{r \cos \phi}{\sqrt{2\pi (n-1)}}e^{-\frac{r^2\sin^2\phi}{2(n-1)}}Q(-\frac{r\cos \phi}{n-1}),
\end{align*}
where $Q(x)$ is the complementary cumulative distribution function of the standard normal distribution.

Thus, the ISMI bound for the ERM algorithm $W$ can be evaluated using the following expression via numerical integration:
\begin{align}
  &|\mathrm{gen}(\mu,P_{W|S}) |\le \sqrt{2 I(W;Z_i) } \nn \\
  &=\sqrt{2\log 2\pi - 2\mathbb{E}_{\|Z_i\|}\Big[h\big(f(\phi\big|\|Z_i\|=r)\big)\Big]}.
\end{align}
Similarly, the ISMI bound for algorithm $W'$ can be computed via numerical integration using
\begin{align}
  &|\mathrm{gen}(\mu,P_{W'|S}) |\le \sqrt{2 I(W';Z_i) } \nn \\
  &=\sqrt{2\log 2\pi - 2\mathbb{E}_{\|Z_i\|}\Big[h\big(f(W'\big|Z_i=z_i)\big)\Big]},
\end{align}
where the conditional distribution $P_{W'|Z_i=z_i}$ can be characterized by the phase distribution
\begin{align}
  &f(\phi'\big|\|Z_i\|=r) \nn \\
  &= \frac{1 - \epsilon}{2\pi} + \frac{\epsilon}{2 \pi} e^{-\frac{r^2}{2(n-1)}} \nn \\
    &\qquad +\frac{\epsilon r \cos \phi'}{\sqrt{2\pi (n-1)}}e^{-\frac{r^2\sin^2\phi'}{2(n-1)}}Q(-\frac{r\cos \phi'}{n-1}).
\end{align}

\subsection{Chaining Mutual Information Bound}

The CMI bound is computed based on the values provided in Table 1 in \cite{NIPS2018}. We note that the CMI  bound in \cite{NIPS2018} is evaluated for the case $n=1$, i.e., there is only one training sample. To plot the CMI bound in \cite{NIPS2018} as a function of the number of samples $n$, we normalize the CMI bound by a $\sqrt{n}$ factor in Figures \ref{fig:ERM} and \ref{fig:noisy}, since $\mathcal O(1/\sqrt{n})$ is the true convergence rate for the generalization error as shown in \eqref{equ:gen_GP}. For instance, Table 1 in \cite{NIPS2018} shows that the CMI bound for the ERM solution $W$ is 19.0352 when $n=1$, which is equivalent to the classical chaining bound. We therefore plot the curve $\frac{19.0352}{\sqrt{n}}$ as the CMI bound  for  comparison with the proposed ISMI bound in Figure \ref{fig:ERM}.

\bibliographystyle{ieeetr}
\bibliography{generalization}

\begin{thebibliography}{10}

\bibitem{bu2019tightening}
Y.~Bu, S.~Zou, and V.~V. Veeravalli, ``Tightening mutual information based
  bounds on generalization error,'' in {\em Proc. IEEE Int. Symp. Information
  Theory (ISIT)}, pp.~587--591, 2019.

\bibitem{goodfellow2016deep}
I.~Goodfellow, Y.~Bengio, A.~Courville, and Y.~Bengio, {\em Deep Learning}.
\newblock MIT press Cambridge, 2016.

\bibitem{krizhevsky2012imagenet}
A.~Krizhevsky, I.~Sutskever, and G.~E. Hinton, ``Image{N}et classification with
  deep convolutional neural networks,'' in {\em Proc. Advances in Neural
  Information Processing Systems (NIPS)}, pp.~1097--1105, 2012.

\bibitem{young2018recent}
T.~Young, D.~Hazarika, S.~Poria, and E.~Cambria, ``Recent trends in deep
  learning based natural language processing,'' {\em IEEE Computational
  Intelligence Magazine}, vol.~13, no.~3, pp.~55--75, 2018.

\bibitem{huval2015empirical}
B.~Huval, T.~Wang, S.~Tandon, J.~Kiske, W.~Song, J.~Pazhayampallil,
  M.~Andriluka, P.~Rajpurkar, T.~Migimatsu, and R.~Cheng-Yue, ``An empirical
  evaluation of deep learning on highway driving,'' {\em arXiv preprint
  arXiv:1504.01716}, 2015.

\bibitem{miotto2018deep}
R.~Miotto, F.~Wang, S.~Wang, X.~Jiang, and J.~T. Dudley, ``Deep learning for
  healthcare: review, opportunities and challenges,'' {\em Briefings in
  Bioinformatics}, vol.~19, no.~6, pp.~1236--1246, 2018.

\bibitem{boucheron2005theory}
S.~Boucheron, O.~Bousquet, and G.~Lugosi, ``Theory of classification: A survey
  of some recent advances,'' {\em ESAIM: Probability and Statistics}, vol.~9,
  pp.~323--375, 2005.

\bibitem{shalev2014understanding}
S.~Shalev-Shwartz and S.~Ben-David, {\em Understanding machine learning: From
  theory to algorithms}.
\newblock Cambridge University Press, 2014.

\bibitem{anthony2009neural}
M.~Anthony and P.~L. Bartlett, {\em Neural network learning: Theoretical
  foundations}.
\newblock Cambridge University Press, 2009.

\bibitem{neyshabur2014search}
B.~Neyshabur, R.~Tomioka, and N.~Srebro, ``In search of the real inductive
  bias: On the role of implicit regularization in deep learning,'' {\em arXiv
  preprint arXiv:1412.6614}, 2014.

\bibitem{zhang2016understanding}
C.~Zhang, S.~Bengio, M.~Hardt, B.~Recht, and O.~Vinyals, ``Understanding deep
  learning requires rethinking generalization,'' in {\em Proc. International
  Conference on Learning Representations (ICLR)}, 2017.

\bibitem{mcallester1999some}
D.~A. McAllester, ``Some {PAC}-{B}ayesian theorems,'' {\em Machine Learning},
  vol.~37, no.~3, pp.~355--363, 1999.

\bibitem{bousquet2002stability}
O.~Bousquet and A.~Elisseeff, ``Stability and generalization,'' {\em J. Mach.
  Learn. Res.}, vol.~2, pp.~499--526, Mar 2002.

\bibitem{elisseeff2005stability}
A.~Elisseeff, T.~Evgeniou, and M.~Pontil, ``Stability of randomized learning
  algorithms,'' {\em J. Mach. Learn. Res.}, vol.~6, pp.~55--79, Jan. 2005.

\bibitem{littlestone1986relating}
N.~Littlestone and M.~Warmuth, ``Relating data compression and learnability,''
  {\em Technical report, University of California, Santa Cruz}, 1986.

\bibitem{bartlett2017spectrally}
P.~L. Bartlett, D.~J. Foster, and M.~J. Telgarsky, ``Spectrally-normalized
  margin bounds for neural networks,'' in {\em Proc. Advances in Neural
  Information Processing Systems (NIPS)}, pp.~6240--6249, 2017.

\bibitem{russo2016controlling}
D.~Russo and J.~Zou, ``Controlling bias in adaptive data analysis using
  information theory,'' in {\em Proc. International Conference on Artifical
  Intelligence and Statistics (AISTATS)}, pp.~1232--1240, 2016.

\bibitem{xu2017information}
A.~Xu and M.~Raginsky, ``Information-theoretic analysis of generalization
  capability of learning algorithms,'' in {\em Proc. Advances in Neural
  Information Processing Systems (NIPS)}, pp.~2524--2533, 2017.

\bibitem{NIPS2018}
A.~Asadi, E.~Abbe, and S.~Verdu, ``Chaining mutual information and tightening
  generalization bounds,'' in {\em Proc. Advances in Neural Information
  Processing Systems (NeurIPS)}, pp.~7245--7254, 2018.

\bibitem{shalev2010learnability}
S.~Shalev-Shwartz, O.~Shamir, N.~Srebro, and K.~Sridharan, ``Learnability,
  stability and uniform convergence,'' {\em J. Mach. Learn. Res.}, vol.~11,
  pp.~2635--2670, Oct. 2010.

\bibitem{bassily2017learners}
R.~Bassily, S.~Moran, I.~Nachum, J.~Shafer, and A.~Yehudayoff, ``Learners that
  leak little information,'' {\em arXiv preprint arXiv:1710.05233}, 2017.

\bibitem{asadi2019chaining}
A.~Asadi and E.~Abbe, ``Chaining meets chain rule: Multilevel entropic
  regularization and training of neural nets,'' {\em arXiv preprint
  arXiv:1906.11148}, 2019.

\bibitem{raginsky2016information}
M.~Raginsky, A.~Rakhlin, M.~Tsao, Y.~Wu, and A.~Xu, ``Information-theoretic
  analysis of stability and bias of learning algorithms,'' in {\em Proc. IEEE
  Information Theory Workshop (ITW)}, pp.~26--30, 2016.

\bibitem{lopez2018generalization}
A.~T. Lopez and V.~Jog, ``Generalization error bounds using {W}asserstein
  distances,'' in {\em Proc. IEEE Information Theory Workshop (ITW)}, pp.~1--5,
  2018.

\bibitem{ISIT2019wass}
H.~Wang, M.~Diaz, J.~S.~S. Filho, and F.~P. Calmon, ``An information-theoretic
  view of generalization via {W}asserstein distance,'' in {\em Proc. IEEE Int.
  Symp. Information Theory (ISIT)}, 2019.

\bibitem{issa2018computable}
I.~Issa and M.~Gastpar, ``Computable bounds on the exploration bias,'' in {\em
  Proc. IEEE Int. Symp. Information Theory (ISIT)}, pp.~576--580, IEEE, 2018.

\bibitem{issa2019strengthened}
I.~Issa, A.~R. Esposito, and M.~Gastpar, ``Strengthened information-theoretic
  bounds on the generalization error,'' in {\em Proc. IEEE Int. Symp.
  Information Theory (ISIT)}, pp.~582--586, IEEE, 2019.

\bibitem{alabdulmohsin2015algorithmic}
I.~M. Alabdulmohsin, ``Algorithmic stability and uniform generalization,'' in
  {\em Proc. Advances in Neural Information Processing Systems (NIPS)},
  pp.~19--27, 2015.

\bibitem{Pensia2018ISIT}
A.~Pensia, V.~Jog, and P.~Loh, ``Generalization error bounds for noisy,
  iterative algorithms,'' in {\em Proc. IEEE Int. Symp. Information Theory
  (ISIT)}, pp.~546--550, June 2018.

\bibitem{boucheron2013concentration}
S.~Boucheron, G.~Lugosi, and P.~Massart, {\em Concentration Inequalities: A
  Nonasymptotic Theory of Independence}.
\newblock Oxford University Press, 2013.

\bibitem{jiao2017dependence}
J.~Jiao, Y.~Han, and T.~Weissman, ``Dependence measures bounding the
  exploration bias for general measurements,'' in {\em Proc. IEEE Int. Symp.
  Information Theory (ISIT)}, pp.~1475--1479, 2017.

\bibitem{negrea2019information}
J.~Negrea, M.~Haghifam, G.~K. Dziugaite, A.~Khisti, and D.~M. Roy,
  ``Information-theoretic generalization bounds for {SGLD} via data-dependent
  estimates,'' in {\em Proc. Advances in Neural Information Processing Systems
  (NeurIPS)}, pp.~11013--11023, 2019.

\bibitem{welling2011bayesian}
M.~Welling and Y.~Teh, ``Bayesian learning via stochastic gradient {Langevin}
  dynamics,'' in {\em Proc. International Conference on Machine Learning
  (ICML)}, pp.~681--688, 2011.

\bibitem{kraskov2004estimating}
A.~Kraskov, H.~St{\"o}gbauer, and P.~Grassberger, ``Estimating mutual
  information,'' {\em Physical Review E}, vol.~69, no.~6, p.~066138, 2004.

\bibitem{gao2018demystifying}
W.~Gao, S.~Oh, and P.~Viswanath, ``Demystifying fixed $ k $-nearest neighbor
  information estimators,'' {\em IEEE Trans. Inform. Theory}, vol.~64, no.~8,
  pp.~5629--5661, 2018.

\bibitem{rowland2019orthogonal}
M.~Rowland, J.~Hron, Y.~Tang, K.~Choromanski, T.~Sarlos, and A.~Weller,
  ``Orthogonal estimation of wasserstein distances,'' in {\em Proc.
  International Conference on Artifical Intelligence and Statistics (AISTATS)},
  pp.~186--195, 2019.

\bibitem{bu2019information}
Y.~Bu, W.~Gao, S.~Zou, and V.~V. Veeravalli, ``Information-theoretic
  understanding of population risk improvement with model compression,'' in
  {\em Proc. AAAI Conference on Artificial Intelligence (AAAI)}, 2020.

\end{thebibliography}

\end{document}